\documentclass{article}

\usepackage{microtype}
\usepackage{graphicx}
\usepackage{subcaption}
\usepackage{booktabs} 

\usepackage{hyperref}
\usepackage{paralist}

\usepackage{stfloats}
\usepackage{multirow}
\usepackage{longtable}

\usepackage[preprint]{icml2026}

\usepackage{amsmath}
\usepackage{amssymb}
\usepackage{mathtools}
\usepackage{amsthm}

\usepackage[capitalize,noabbrev]{cleveref}

\theoremstyle{plain}
\newtheorem{theorem}{Theorem}[section]

\newtheorem{lemma}[theorem]{Lemma}

\theoremstyle{definition}
\newtheorem{definition}[theorem]{Definition}

\theoremstyle{remark}

\usepackage{algorithmic}
\renewcommand{\algorithmicrequire}{\textbf{Input:}}
\renewcommand{\algorithmicensure}{\textbf{Output:}}
\renewcommand{\algorithmiccomment}[1]{\hfill $\triangleright$ #1}
\newcommand{\CommentLeft}[1]{\(\triangleright\) #1}

\usepackage[textsize=tiny]{todonotes}

\icmltitlerunning{PAMod: Modeling Cyclical Shifts via Phase-Amplitude Modulation for Non-stationary Time Series Forecasting}

\begin{document}

\twocolumn[
  \icmltitle{PAMod: Modeling Cyclical Shifts via Phase-Amplitude Modulation for Non-stationary Time Series Forecasting}



  \icmlsetsymbol{equal}{*}

  \begin{icmlauthorlist}
    \icmlauthor{Yingbo Zhou}{1}
    \icmlauthor{Yutong Ye}{2}
    \icmlauthor{Shuhao Li}{1}
    \icmlauthor{Rui Qian}{1}
    \icmlauthor{Qiang Huang}{1}
    \icmlauthor{Lemao Liu}{1}
    \icmlauthor{Li Sun}{3}
    \icmlauthor{Dejing Dou}{1}
  \end{icmlauthorlist}

  \icmlaffiliation{1}{Fudan University}
  \icmlaffiliation{2}{Beihang University}
  \icmlaffiliation{3}{Beijing University of Posts and Telecommunications}

  \icmlcorrespondingauthor{Dejing Dou}{doudejing@fudan.edu.cn}

  \icmlkeywords{Machine Learning, ICML}

  \vskip 0.3in
]



\printAffiliationsAndNotice{}  

\begin{abstract}
  Real-world time series forecasting faces the fundamental challenge of non-stationary statistical properties, including shifts in mean and variance over time. 
  While reversible instance normalization (RevIN) has shown promise by stationarizing inputs and denormalizing outputs, it relies on the strong assumption that historical and future distributions remain identical. 
  We observe that in many practical applications, distribution shifts follow cyclical patterns that correlate with periodic positions (e.g., seasonal and holiday volatility). 
  To this end, we propose \textbf{PAMod}, a lightweight yet powerful framework that models cyclical distribution shifts via \textbf{P}hase-\textbf{A}mplitude \textbf{Mod}ulation in the normalized feature space. 
  PAMod learns periodic embeddings to modulate representations: phase modulation captures mean shifts, while amplitude modulation adapts to variance changes. 
  Crucially, we prove mathematically that modulating in normalized space is equivalent to applying dynamic denormalization, offering an elegant unification of distribution adaptation and representation learning. 
  Extensive experiments on twelve real-world benchmarks demonstrate that PAMod achieves state-of-the-art performance with fewer computational resources.
  Furthermore, our modulation mechanism, as a novel plug-and-play technique, can improve existing time-series forecasting methods with simple integration.
\end{abstract}

\section{Introduction}

Time series forecasting underpins critical decision-making across domains, including energy management \cite{akay2007grey, DBLP:journals/tkdd/FanFZBZX24}, finance analysis \cite{DBLP:journals/asc/SezerGO20, DBLP:conf/www/Lin0Q0CZLG25}, healthcare monitoring \cite{bertozzi2020challenges, DBLP:conf/nips/WangHWZ23}, and traffic optimization \cite{DBLP:journals/tits/ShuCX22, DBLP:conf/icde/00010GY0HXJ24}. 
Despite significant advances in deep learning architectures, real-world time series remain particularly challenging due to their inherent non-stationary statistical properties (mean, variance, and higher moments) evolve \cite{DBLP:conf/iclr/KimKTPCC22, DBLP:conf/icml/LiuWHL0BX25}, violating the stationarity assumptions of many models.

Empirical evidence of the non-stationary is shown in Figure \ref{fig:1}(a), where the training and test sets of real-world time series (e.g., the OT channel in ETTh1) exhibit distinct probability distributions with diverging means and variances.
The dashed lines marking the respective means illustrate that statistical properties of the test set often systematically diverge from those of the training set.
To bridge this gap, two dominant research paradigms have emerged.
The first, the normalization paradigm \cite{DBLP:conf/iclr/KimKTPCC22, DBLP:conf/nips/LiuCLHLXC23, DBLP:conf/nips/0001WLLYXZ24}, seeks to eliminate non-stationarity by coercing series into a stationary space for modeling. 
The second paradigm relies on complex architectures \cite{DBLP:conf/nips/LiuWWL22, DBLP:conf/iclr/WangDAY23,  DBLP:conf/icml/LiuWHL0BX25}, which attempt to implicitly model and absorb the full spectrum of distributional shifts within their vast parameter space.

We identify a critical, shared limitation in both approaches: they fail to explicitly model the structured, predictable component of non-stationarity that arises from cyclical dependencies. Normalization-based methods discard this structure by design, while complex architectures inefficiently attempt to rediscover it from data, often conflating it with noise. 
This leads to two key shortcomings: (1) \textbf{suboptimal parameter efficiency}, as either useful structure is removed or must be laboriously re-learned, and (2) \textbf{vulnerable generalization}, particularly when predictable cyclical shifts dominate, as models lack an inductive bias to capture them robustly.

\begin{figure*}[ht]
\centering
\includegraphics[width=\textwidth]{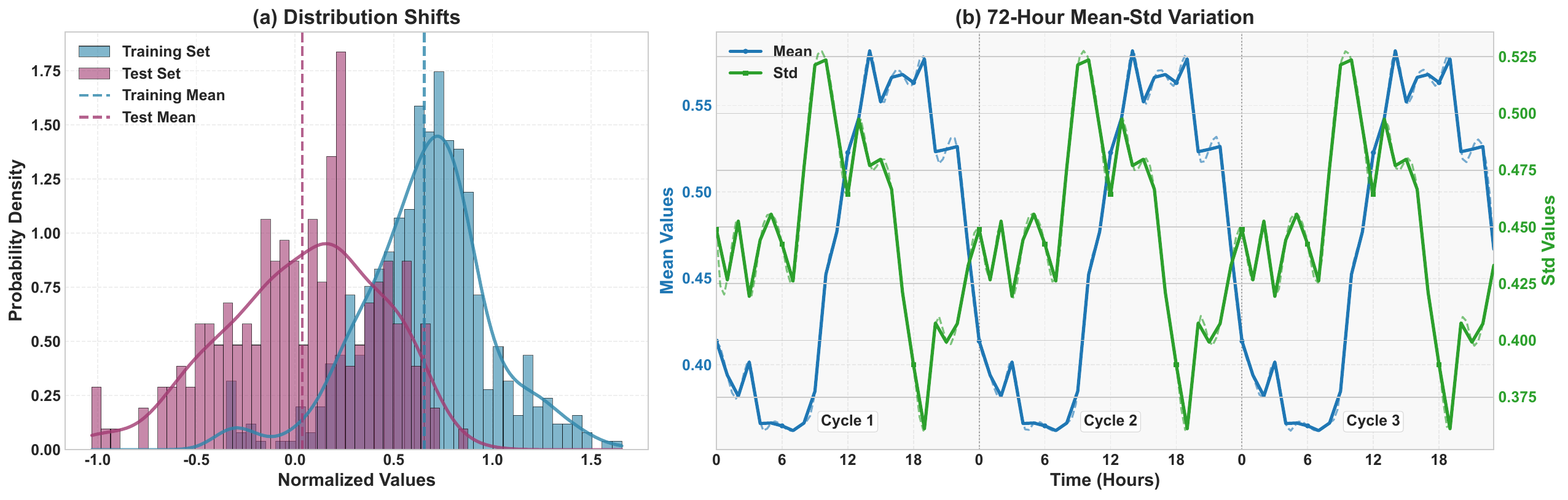}
\caption{
Visualization of distribution shift and periodic mean-std variation in the OT channel of ETTh1.
(a) histograms and density curves illustrate distinct probability distributions, with dashed lines marking the respective means in training and test sets. (b) 72-hour cyclic variations exhibit stable 24-hour repetition across three consecutive cycles (Cycle 1–3), highlighting inherent diurnal periodicity. Notably, distribution shifts originate from decoupled mean and variance variations within these cycles.}
\label{fig:1}
\vspace{-0.15cm}
\end{figure*}

We argue that rather than indiscriminately eliminating or preserving non-stationarity, we should distinguish between random fluctuations and structured cyclical shifts. 
As seen in Figure \ref{fig:1}(b), over 72 consecutive hours (three 24-hour cycles), the OT channel in ETTh1 exhibits stable diurnal periodicity, where mean variations and variance fluctuations maintain cyclical regularity.
Crucially, these components evolve independently yet synchronously across cycles, demonstrating that distribution shifts originate from structured rather than arbitrary variations.
This observation motivates a paradigm shift: \textbf{distribution shifts in time series can decompose into cyclical components with correlated positions}.
By modeling these cyclical patterns explicitly, we can extend the normalization paradigm to adapt to cyclical non-stationarity while capturing significant distribution shifts with a simple design.

To explicitly model the cyclical shifts, we propose \textbf{P}hase-\textbf{A}mplitude \textbf{Mod}ulation (\textbf{PAMod}), a novel framework that learns cyclical distribution shifts through periodic modulation design.
Inspired by the phase-amplitude modulation in communication systems \cite{roder2006amplitude}, PAMod introduces two complementary mechanisms: phase modulation captures systematic mean shifts via additive transformations, while amplitude modulation adapts to variance changes via multiplicative scaling.
These mechanisms operate in the normalized feature space, achieving equivalent distribution adaptation through a more elegant and unified approach.

In summary, our main contributions are three folds:
\begin{compactitem}
  \item We formalize cyclical distribution shifts as mean-variance decoupled variations, providing new analytical insight into periodic non-stationarity.
  \item We propose PAMod with the phase-amplitude modulation mechanism that learns to adjust means (phase) and variances (amplitude) based on cyclical positions.
  \item We establish new state-of-the-art performance across multiple real-world benchmarks, reducing MSE by 4–16\% with superior memory efficiency (5–10× fewer parameters than transformer-based methods).
\end{compactitem}

\section{Related Work}

\textbf{Time Series Forecasting Architectures.}
Deep learning has revolutionized time series forecasting through diverse architectures. 
Recurrent Neural Networks (RNNs) model sequential dependencies through recurrent states \cite{DBLP:conf/sigir/LaiCYL18, DBLP:journals/corr/abs-2308-11200}, while Temporal Convolutional Networks (TCNs) employ dilated convolutions for efficient long-range modeling \cite{DBLP:conf/nips/LiuZCXLM022, DBLP:conf/iclr/LuoW24}. 
Transformer-based approaches \cite{DBLP:conf/aaai/ZhouZPZLXZ21, DBLP:conf/icml/ZhouMWW0022, DBLP:conf/kdd/Piao0MMS24} leverage self-attention or frequency-domain mechanisms, and MLP-based methods \cite{DBLP:conf/aaai/ZengCZ023, DBLP:conf/nips/YiZFWWHALCN23} demonstrate competitive performance with minimal complexity. 
Graph Neural Networks (GNNs) capture multivariate correlations via spatiotemporal graphs \cite{DBLP:conf/nips/YiZFHHWACN23, DBLP:conf/aaai/CaiLLFW24}, and state space models offer linear-time sequence modeling via selective scanning \cite{DBLP:journals/corr/abs-2312-00752}. 
Despite their advances, these architectures often assume stationarity or treat periodicity as the auxiliary factor, leaving a gap for designs that explicitly unify periodic modeling with distribution shift mitigation.

\textbf{Handing Non-stationary in Time Series.}
Non-stationarity is a core challenge in time series forecasting. 
Classical statistical methods \cite{box2015time} address this by applying temporal differencing to enforce stationarity.
Recently, normalization techniques \cite{DBLP:conf/iclr/KimKTPCC22, DBLP:conf/nips/LiuCLHLXC23, DBLP:conf/nips/0001WLLYXZ24} have emerged as the de facto standardfor eliminating non-stationarity. 
Conversely, some specific designs for certain network architectures \cite{DBLP:conf/nips/LiuWWL22, DBLP:conf/iclr/WangDAY23,  DBLP:conf/icml/LiuWHL0BX25} have been introduced to capture distribution shifts. 
Additionally, decomposition-based methods  \cite{DBLP:conf/iclr/WuHLZ0L23, DBLP:conf/iclr/WangWSHLMZ024} separate time series into trend and seasonal components to mitigate non-stationary effects. 
A key limitation of these approaches is that they treat non-stationarity as a monolithic disturbance to be eliminated, rather than structured and periodic variations that can be explicitly modeled for targeted correction.

\textbf{Periodic Modeling in Time Series.}
Periodicity is a ubiquitous property of real-world time series (e.g., diurnal cycles in sensor data), and modeling it effectively can enhance forecasting performance.
Classical methods \cite{cleveland1990stl, box2015time} explicitly capture periodic patterns via predefined functions. 
To better leverage periodic information, current approaches \cite{DBLP:conf/iclr/0001WLLB0X24, DBLP:conf/icml/Lin0WCY24, DBLP:conf/nips/Lin0HWMZ24} primarily focus on explicitly identifying, representing, or transforming repeating temporal patterns in the data.
For instance, CycleNet \cite{DBLP:conf/nips/Lin0HWMZ24} introduces learnable recurrent cycles to model periodic patterns, while our concurrent work \cite{ConcurrentWork1}, disentangles periodic patterns as phase and amplitude components. 
Crucially, existing periodic modeling works rarely connect periodicity to distribution shifts, which treat periodic patterns as predictive features rather than the root cause of structured drift.

Unlike the above works, PAMod bridges these gaps through a cohesive design: 1) a lightweight architecture that avoids parameter inefficiency; 2) an explicit modeling approach to non-stationarity that preserves informative cyclical structures; and 3) a novel mechanism that directly ties periodicity to distribution shifts via the phase-amplitude modulation. 
To the best of our knowledge, PAMod pioneers the use of \textbf{explicit cyclical modeling to directly adapt to the non-stationary nature of time series}.

\section{Methodology}

\subsection{Preliminaries}

\textbf{Problem Definition}.
Given historical time series $X=\{x_1, x_2, \cdots, x_T\}\in\mathbb{R}^{T\times C}$ with $C$ variates, the goal of time series forecasting is to predict next $H$-step values $\hat{Y}=\{\hat{x}_{T+1}, \hat{x}_{T+2}, \cdots, \hat{x}_{T+H}\}\in\mathbb{R}^{H\times C}$ with learnable forecasting function $f_{\theta}(\cdot)$:
\begin{equation}
    \hat{x}_{T+1:T+H}=f_{\theta}(x_{1:T}).
\end{equation}

\textbf{RevIN Formulation}. 
To mitigate the non-stationarity of time series, Reversible Instance Normalization (RevIN), as the normalization paradigm, is adopted by almost all the mainstream forecasters.
Concretely, RevIN \cite{DBLP:conf/iclr/KimKTPCC22} normalizes the inputs as follows:
\begin{equation}
\begin{aligned}
        \overline{x}_{t}=&\frac{x_t-\mu_{X}}{\sigma_{X}}, ~~~\mu_{X}=\frac{1}{T}\sum_{t=1}^{T}x_t, \\
        \sigma_{X}&=\sqrt{\frac{1}{T}\sum_{t=1}^{T}(x_t-\mu_{X})^2}.
\end{aligned}
\end{equation}
After forecasting $\overline{y}_m=f_{\theta}(\overline{x}_t)$, it denormalizes:
\begin{equation}
    \hat{y}_{m}=\overline{y}_{m}\cdot\sigma_{X} + \mu_{X},
    \label{eq:3}
\end{equation}
where $m\in\{T+1, \cdots, T+H\}$.
This assumes consistent distributions between historical inputs $X$ and future ground truth $Y$, i.e., $\mu_{X}=\mu_{Y}, \sigma_{X}=\sigma_{Y}$, which fails in real-world applications as distribution shifts occur.

\begin{figure}[ht]
\centering
\includegraphics[width=\linewidth]{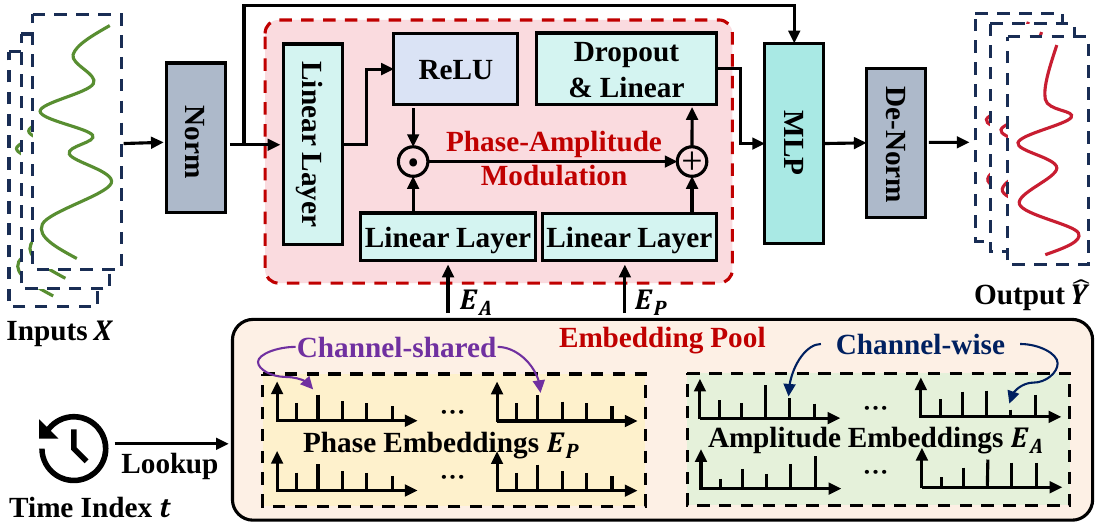}
\caption{Overview of PAMod, which consists of normalization, phase-amplitude modulation with the embedding pool, MLP and denormalization for non-stationary time series forecasting.}
\label{fig:2}
\vspace{-0.15cm}
\end{figure}

\subsection{Overview of PAMod}
In Figure \ref{fig:1}(b), we can observe that distribution shifts can be decomposed into mean shifts (phase) and variance changes (amplitude), both functions of periodic positions.
To this end, we model cyclical shifts via phase-amplitude modulation in the normalized feature space.
Our proposed method, PAMod, consists of instance-wise normalization, learnable cyclical embedding-guided phase-amplitude modulation, MLP and denormalization, illustrated in Figure \ref{fig:2}.
While normalization removes arbitrary, non-cyclical distribution shifts to stabilize learning, our approach explicitly models and reintroduces the structured, predictable non-stationarity inherent in cyclical patterns to enhance forecasting accuracy.

\subsection{Core Components of PAMod}
\textbf{Intuition}.
The design of PAMod is based on the generalized expression of amplitude-phase modulation in communication theory \cite{bloch1944modulation, roder2006amplitude}.

Consider a carrier wave $\cos{(2\pi f_c t)}$ whose amplitude $A(t)$ and phase $P(t)$ are simultaneously modulated by information signals $M_A(t)$ and $M_P(t)$:
\begin{equation}
    x_{M}(t)=\underbrace{[A_c + M_{A}(t)]}_{A(t)}\cdot \cos(2\pi f_c t+\underbrace{k\cdot M_{P}(t)}_{P(t)}),
\end{equation}
where $A_c$ is the constant carrier amplitude, and $k$ is the phase modulation index. 
To elucidate the combined effect, we apply a first-order Taylor expansion  under the assumption of a small phase deviation (i.e., $kM_P(t)\ll 1$), which corresponds to modeling smooth distribution shifts:
\begin{equation}
\begin{aligned}
    x(t)&\approx A(t)[\cos(2\pi f_ct)-kM_P(t)\sin(2\pi f_ct)] \\
    &=\underbrace{A(t)\cos(2\pi f_c t)}_{\text{In-phase Component}} - \underbrace{A(t)kM_P(t)\sin(2\pi f_c t)}_{\text{Quadrature Component}}.
\end{aligned}
\end{equation}
Since $kM_P(t)\ll 1$, the magnitude of the quadrature term satisfies $A(t)kM_P(t)\ll A(t)$.
Consequently, the quadrature term can be treated as a weak additive perturbation $\Phi (t)$, and the primary signal satisfies:
\begin{equation}
    x(t)\approx A(t)\cos(2\pi f_c t) + \Phi(t).
\end{equation}
Recalling Equation (\ref{eq:3}), we can find that the denormalization process and the phase-amplitude modulation share a perfectly aligned mathematical structure.
Therefore, instead of using static scaling factors, we can modulate the normalized features with learnable cyclical signals to guide the learning of dynamic distribution shifts.

\textbf{Learnable Cyclical Embedding}.
To obtain modulation signals that resemble $A(t)$ and $\Phi(t)$ within the time series modeling framework, we propose learnable cyclical embeddings, which serve as data-driven proxies that parameterize the structured phase (mean offset) and amplitude (variance fluctuation) variations.
 
Considering that real-world time series typically exhibit prominent and explicit cyclic patterns \cite{DBLP:conf/nips/Lin0HWMZ24, DBLP:conf/icml/LinCWQL25}, the specific cycle length $L$ is available and straightforward via autocorrelation functions \cite{madsen2007time}.
Given a learnable mean matrix $W_m\in\mathbb{R}^{L\times T}$, a shared embedding vector is retrieved based on the cyclical position of the absolute time $t$ for all the channels:
\begin{equation}
    E_P(t)=\text{Lookup}(W_m, t~\text{mod}~L)\in \mathbb{R}^{1\times T},
\end{equation}
where the Lookup operation conditions the model on the absolute phase within the cycle ($t~\text{mod}~L$), enabling it to dynamically adapt to systematic mean offsets that correlate with periodic patterns.

Since the heterogeneous variance shifts observed across different channels, we define a channel-wise embedding tensor $W_v\in\mathbb{R}^{L\times C \times T}$  to capture cycle-dependent variance fluctuations for each channel $i\in [0, C-1]$:
\begin{equation}
    E_A^i(t) =\text{Lookup}(W_v, i, t~\text{mod}~L) \in \mathbb{R}^{C\times T}.
\end{equation}
The amplitude embedding $E_a^i(t)$ allows each variate to independently learn to amplify or suppress variance fluctuations based on its position within the periodic cycle.

\textbf{Phase-Amplitude Modulation}.
With learnable $E_P(t)$ and $E_A(t)$, we implement our phase-amplitude modulation to transform the normalized features $X_{\text{norm}}$. 
First, we use linear projection $U_1$ with ReLU activation to extract nonlinear temporal patterns as carrier waves:
\begin{equation}
    X_1 = \text{ReLU}(U_1X_{\text{norm}}), ~~U_1\in\mathbb{R}^{T\times S},
\end{equation}
where $S=4T$ in the hidden dimension.
After that, we project the learnable embeddings to bridge the gap between the embedding and feature space, and then model cyclical shifts via element-wise multiplication and additive fusion:
\begin{equation}
    X_2 = X_1\odot U_2E_A(t)+U_3E_P(t), ~~U_2, U_3\in\mathbb{R}^{T\times S}.
\end{equation}
Finally, the modulated features $X_{\text{mod}}$ can be obtained with linear projection and Dropout operation:
\begin{equation}
    X_{\text{mod}}=\text{Dropout}(U_4X_2), ~~U_4\in\mathbb{R}^{S\times T}.
\end{equation}

\textbf{MLP-based Prediction Head}.
We adopt a two-layer MLP architecture, optionally with Dropout operation, to project the modulated features into the expected prediction horizon:
\begin{equation}
    \overline{Y}=U_6(\text{Dropout}(\text{GeLU}(U_5(X_{\text{norm}}+X_{\text{mod}}))),
    \label{eq:12}
\end{equation}
where $U_5\in \mathbb{R}^{d\times T}$ and $U_6\in\mathbb{R}^{H\times d}$.

\subsection{Theoretical Analysis}
We provide a concise theoretical argument explaining why explicitly modeling cycle-dependent distribution shift via phase-amplitude modulation improves generalization, especially under periodic non-stationarity.

\paragraph{Periodic distribution shift setting.}
Let $X_t\in\mathbb{R}^{C\times T}$ be an input window ending at absolute time $t$, and let $\tau=t\bmod L$ denote the cyclical position. Real-world series (e.g., load/traffic) often exhibit \emph{periodic distribution shift} $\mathcal{D}$: the conditional distribution changes with $\tau$ while remaining relatively stable across different cycles:
\begin{equation}
\mathcal{D}_t \equiv \mathcal{D}_{\tau}, \quad \tau=t\bmod L,
\end{equation}
i.e., samples with the same $\tau$ are drawn from the same distribution regardless of the absolute time index.

\paragraph{Cycle-dependent location-scale shift.}
A common and practically relevant family of shifts is the channel-wise location-scale form:
\begin{equation}
X_t = \mu(\tau) + \sigma(\tau)\odot Z_t,\qquad \tau=t\bmod L,
\label{eq:14}
\end{equation}
where $Z_t$ is an approximately cycle-invariant process, and $\mu(\tau)$ and $\sigma(\tau)$ encode structured \emph{mean offsets} and \emph{variance fluctuations} within each cycle.
Based on Equation (\ref{eq:14}), the optimal predictor is inherently $\tau$-conditional:
\begin{equation}
f_{\theta}^\star(X_t,\tau)=\mathbb{E}[Y_t\,|\,X_t,\tau],
\end{equation}
and ignoring $\tau$ forces a single hypothesis to fit a mixture of heterogeneous distributions $\{\mathcal{D}_\tau\}_{\tau=0}^{L-1}$, which typically increases approximation error and harms out-of-distribution performance across time.

\paragraph{PAMod as cycle-aware mean and variance adaptation.}
According to Equation \eqref{eq:3} and \eqref{eq:12}, the modulated prediction can be expressed as:
\begin{equation}
    \begin{aligned}
        \hat{Y}_{\text{PAMod}}&=\sigma_X\cdot{f_{\theta}}(X_{\text{norm}, \tau})+\mu_X\\
        &=\sigma_X\cdot f_{\theta}(X_{\text{norm}}+X_{\text{mod}}) + \mu_X.
        \label{eq:16}
    \end{aligned}
\end{equation}
Assuming the neural network is approximately linear and the modulation is small relative to the normalized input, PAMod's prediction can be approximated as:
\begin{equation}
    \begin{aligned}
        \hat{Y}_{\text{PAMod}}&=\sigma_X\cdot f_{\theta}(X_{\text{norm}}) \odot (1 + \alpha(\tau))\\
        &+ (\sigma_X\cdot \phi(\tau)+\mu_X),
    \end{aligned}
\end{equation}
where $\alpha(\tau)$ and $\phi(\tau)$ represent the learned amplitude $U_2E_A(t)$ and phase $U_3E_P(t)$, respectively. This formulation shows that our PAMod can adaptively adjust both the mean and variance of the base prediction $f_{\theta}(X_{\text{norm}})$ based on the cyclical position $\tau$ to model distribution shifts.
Moreover, we provide the performance guarantee of PAMod under periodic location-scale shifts in Appendix \ref{app:a}.

\begin{table*}[!hb]
\centering
\vspace{-0.3cm}
\caption{Multivariate forecasting performance. The lookback length is set to $L=96$ and all the results are averaged from all predictions $H\in\{12, 24, 48, 96\}$ for PEMS and $H\in\{96, 192, 336, 720\}$ for other benchmarks. See Table \ref{tab:8} in the Appendix for the full results.}
\label{tab:1}
\scriptsize
\setlength{\tabcolsep}{1.2mm}{
\begin{tabular}{c|cc|cc|cc|cc|cc|cc|cc|cc|cc|cc}
\specialrule{0.15em}{0pt}{2pt}
\multirow{2}{*}{Model} & \multicolumn{2}{c|}{\textbf{PAMod}}  & \multicolumn{2}{c|}{TQNet}   & \multicolumn{2}{c|}{TimeEmb} & \multicolumn{2}{c|}{FilterTS} & \multicolumn{2}{c|}{Amplifier} & \multicolumn{2}{c|}{TimeXer} & \multicolumn{2}{c|}{CycleNet} & \multicolumn{2}{c|}{TimeMixer} & \multicolumn{2}{c|}{iTransformer} & \multicolumn{2}{c}{PatchTST} \\ 
\cmidrule(lr){2-3} \cmidrule(lr){4-5} \cmidrule(lr){6-7} \cmidrule(lr){8-9} \cmidrule(lr){10-11} \cmidrule(lr){12-13} \cmidrule(lr){14-15} \cmidrule(lr){16-17} \cmidrule(lr){18-19} \cmidrule(lr){20-21}
                        & \multicolumn{2}{c|}{\textbf{(Ours)}} & \multicolumn{2}{c|}{\citeyearpar{DBLP:conf/icml/LinCWQL25}}    & \multicolumn{2}{c|}{\citeyearpar{DBLP:journals/corr/abs-2510-00461}}       & \multicolumn{2}{c|}{\citeyearpar{DBLP:conf/aaai/WangLDW25}}     & \multicolumn{2}{c|}{\citeyearpar{DBLP:conf/aaai/Fei000N25}}      & \multicolumn{2}{c|}{\citeyearpar{DBLP:conf/nips/WangWDQZLQWL24}}       & \multicolumn{2}{c|}{\citeyearpar{DBLP:conf/nips/Lin0HWMZ24}}     & \multicolumn{2}{c|}{\citeyearpar{DBLP:conf/iclr/WangWSHLMZ024}}       & \multicolumn{2}{c|}{\citeyearpar{DBLP:conf/iclr/LiuHZWWML24}}         & \multicolumn{2}{c}{\citeyearpar{DBLP:conf/iclr/NieNSK23}}     \\  
                       \specialrule{0.10em}{1pt}{1pt}
Metric                 & MSE               & MAE              & MSE            & MAE         & MSE            & MAE         & MSE           & MAE           & MSE            & MAE           & MSE           & MAE          & MSE           & MAE           & MSE            & MAE           & MSE                  & MAE        & MSE           & MAE          \\ 
\specialrule{0.10em}{1pt}{1pt}
ETTm1                  & \textbf{0.363}    & \textbf{0.378}   & 0.377          & 0.393       & {\underline{0.368}}    & {\underline{0.385}} & 0.385         & 0.396         & 0.382          & 0.395         & 0.382         & 0.397        & 0.379         & 0.396         & 0.381          & 0.395         & 0.407                & 0.410      & 0.387         & 0.400        \\ 
\specialrule{0.10em}{1pt}{1pt}
ETTm2                  & \textbf{0.263}    & \textbf{0.307}   & 0.277          & 0.323       & {\underline{0.265}}    & {\underline{0.308}} & 0.277         & 0.322         & 0.280          & 0.326         & 0.274         & 0.322        & 0.266         & 0.314         & 0.275          & 0.323         & 0.288                & 0.332      & 0.281         & 0.326        \\ 
\specialrule{0.10em}{1pt}{1pt}
ETTh1                  & \textbf{0.413}    & \textbf{0.420}   & 0.441          & 0.434       & {\underline{0.425}}    & {\underline{0.425}} & 0.434         & 0.430         & 0.430          & 0.428         & 0.437         & 0.437        & 0.457         & 0.441         & 0.447          & 0.440         & 0.454                & 0.448      & 0.469         & 0.455        \\ 
\specialrule{0.10em}{1pt}{1pt}
ETTh2                  & \textbf{0.360}    & \textbf{0.387}   & 0.378          & 0.402       & {\underline{0.362}}    & {\underline{0.390}} & 0.375         & 0.398         & 0.381          & 0.405         & 0.368         & 0.396        & 0.388         & 0.409         & 0.364          & 0.395         & 0.383                & 0.407      & 0.387         & 0.407        \\ 
\specialrule{0.10em}{1pt}{1pt}
ECL                    & {\underline{0.165}}       & \textbf{0.254}   & \textbf{0.164} & {\underline{0.259}} & 0.168          & 0.261       & 0.180         & 0.272         & 0.172          & 0.266         & 0.171         & 0.270        & 0.168         & 0.259         & 0.182          & 0.272         & 0.178                & 0.270      & 0.205         & 0.290        \\ 
\specialrule{0.10em}{1pt}{1pt}
Traffic                & {\underline{0.434}}       & \textbf{0.272}   & 0.445          & {\underline{0.276}} & 0.453          & 0.295       & 0.470         & 0.315         & 0.483          & 0.317         & 0.466         & 0.287        & 0.472         & 0.314         & 0.484          & 0.297         & \textbf{0.428}       & 0.282      & 0.481         & 0.300        \\ 
\specialrule{0.10em}{1pt}{1pt}
Weather                & {\underline{0.239}}       & \textbf{0.261}   & 0.242          & 0.269       & \textbf{0.237} & {\underline{0.262}} & 0.245         & 0.274         & 0.253          & 0.275         & 0.241         & 0.271        & 0.243         & 0.271         & 0.240          & 0.271         & 0.258                & 0.278      & 0.259         & 0.273        \\ 
\specialrule{0.10em}{1pt}{1pt}
Solar                  & {\underline{0.206}}       & \textbf{0.226}   & \textbf{0.198} & {\underline{0.256}} & 0.248          & 0.270       & 0.215         & 0.277         & 0.241          & 0.270         & 0.237         & 0.302        & 0.210         & 0.261         & 0.216          & 0.280         & 0.233                & 0.262      & 0.270         & 0.307        \\ 
\specialrule{0.10em}{1pt}{1pt}
PEMS03                 & \textbf{0.091}    & \textbf{0.189}   & {\underline{0.097}}    & {\underline{0.203}} & 0.104          & 0.207       & 0.134         & 0.246         & 0.131          & 0.239         & 0.112         & 0.214        & 0.118         & 0.226         & 0.167          & 0.267         & 0.113                & 0.222      & 0.180         & 0.291        \\ 
\specialrule{0.10em}{1pt}{1pt}
PEMS04                 & \textbf{0.086}    & \textbf{0.185}   & {\underline{0.091}}    & {\underline{0.197}} & 0.096          & 0.200       & 0.125         & 0.241         & 0.135          & 0.249         & 0.105         & 0.209        & 0.119         & 0.232         & 0.185          & 0.287         & 0.111                & 0.221      & 0.195         & 0.307        \\ 
\specialrule{0.10em}{1pt}{1pt}
PEMS07                 & {\underline{0.080}}       & \textbf{0.168}   & \textbf{0.075} & {\underline{0.171}} & 0.097          & 0.188       & 0.120         & 0.220         & 0.122          & 0.226         & 0.085         & 0.182        & 0.113         & 0.214         & 0.181          & 0.271         & 0.101                & 0.204      & 0.211         & 0.303        \\ 
\specialrule{0.10em}{1pt}{1pt}
PEMS08                 & \textbf{0.119}    & \textbf{0.190}   & 0.142          & 0.229       & {\underline{0.132}}    & {\underline{0.221}} & 0.180         & 0.266         & 0.183          & 0.271         & 0.175         & 0.250        & 0.150         & 0.246         & 0.226          & 0.299         & 0.150                & 0.226      & 0.280         & 0.321        \\ 
\specialrule{0.10em}{1pt}{1pt}
$1^{st}$ Count         & \textbf{7}        & \textbf{12}      & {\underline{3}}        & 0           & 1              & 0           & 0             & 0             & 0              & 0             & 0             & 0            & 0             & 0             & 0              & 0             & 1                    & 0          & 0             & 0            \\ 
\specialrule{0.15em}{2pt}{0pt}
\end{tabular}}
\end{table*}

\section{Experiments}
In this section, we conduct comprehensive experiments with real-world time series benchmarks to sufficiently assess the performance of our proposed PAMod, including comparison with SOTA baselines
on performance and efficiency (Section \ref{subsec:4.2}), ablation studies (Section \ref{subsec:4.3}), compatibility analysis (Section \ref{subsec:4.4}), and interpretable cases (Section \ref{subsec:4.5}).

\subsection{Experimental Settings}
\textbf{Datasets}.
Following established evaluation protocols \cite{DBLP:conf/iclr/LiuHZWWML24, DBLP:conf/icml/LinCWQL25} in time series forecasting literature, we perform experiments on twelve real-world benchmark datasets. 
These include four ETT datasets (ETTh1, ETTh2, ETTm1, ETTm2), Electricity (ECL), Traffic, Solar-Energy, Weather, and four PEMS datasets (PEMS03, PEMS04, PEMS07, PEMS08) \cite{DBLP:conf/aaai/ZhouZPZLXZ21, DBLP:conf/nips/WuXWL21, DBLP:conf/nips/LiuZCXLM022}.
In line with common practice, each dataset is partitioned into training, validation, and test sets. 
For the ETT series, we adopt a 6:2:2 split ratio, while a 7:1:2 ratio is used for all other datasets.
The prediction horizons $H$ are set to $\{12, 24, 48, 96\}$ for the PEMS datasets and $\{96, 192, 336, 720\}$ for the others. 

\textbf{Baselines}.
We compare PAMod against nine state-of-the-art models in recent years, including TQNet \cite{DBLP:conf/icml/LinCWQL25}, TimeEmb \cite{DBLP:journals/corr/abs-2510-00461}, FilterTS \cite{DBLP:conf/aaai/WangLDW25}, Amplifier \cite{DBLP:conf/aaai/Fei000N25}, TimeXer \cite{DBLP:conf/nips/WangWDQZLQWL24}, CycleNet \cite{DBLP:conf/nips/Lin0HWMZ24}, TimeMixer \cite{DBLP:conf/iclr/WangWSHLMZ024}, iTransformer \cite{DBLP:conf/iclr/LiuHZWWML24}, and PatchTST \cite{DBLP:conf/iclr/NieNSK23}.
In line with standard evaluation protocols, a uniform look-back length of 96 is adopted for all models to ensure a fair comparison.

\textbf{Implementation Details}.
All experiments are implemented in PyTorch \cite{DBLP:conf/nips/PaszkeGMLBCKLGA19} and trained on a single NVIDIA GeForce RTX 4090 GPU (24GB), using the Adam optimizer \cite{DBLP:journals/corr/KingmaB14}. 
To ensure stable training, we follow prior work \cite{DBLP:conf/iclr/0049PS0YY0L0T25, DBLP:conf/icml/LiuWHL0BX25} in employing a hybrid loss function that combines Mean Absolute Error (MAE) in both the time and frequency domains. The key hyperparameter $\tau$, which defines the periodicity length, is set for each dataset following the established guidelines from CycleNet and TQNet. 
Model performance is evaluated using the standard metrics of Mean Squared Error (MSE) and Mean Absolute Error (MAE).

\subsection{Main Results}
\label{subsec:4.2}

\textbf{Forecasting Performance}.
Table \ref{tab:1} shows the comparison results of PAMod with 9 recent methods across 12 real-world time series benchmarks.
Lower MSE and MAE values indicate better forecasting performance.
Overall, PAMod consistently achieves Top 2 performance across all datasets, with the smallest error metrics in 19 out of 24 evaluation cases, demonstrating new state-of-the-art accuracy.
Notably, an increasing number of variables in a dataset tends to amplify cross-series non-stationarity, which in turn increases the difficulty of forecasting.
Nevertheless, PAMod still delivers competitive and even better performance.
For instance, while TQNet employs cross-attention mechanisms to capture temporal dependencies and achieves an average MSE of $0.445$ on the Traffic dataset, PAMod, which leverages lightweight phase-amplitude modulation to explicitly model non-stationary dynamics, attains a lower average MSE of $0.434$ on the same dataset.
This empirical result confirms that explicitly modeling non-stationarity is a more effective strategy for capturing complex temporal dependencies.

\begin{figure*}[ht]
\centering
\includegraphics[width=0.95\linewidth]{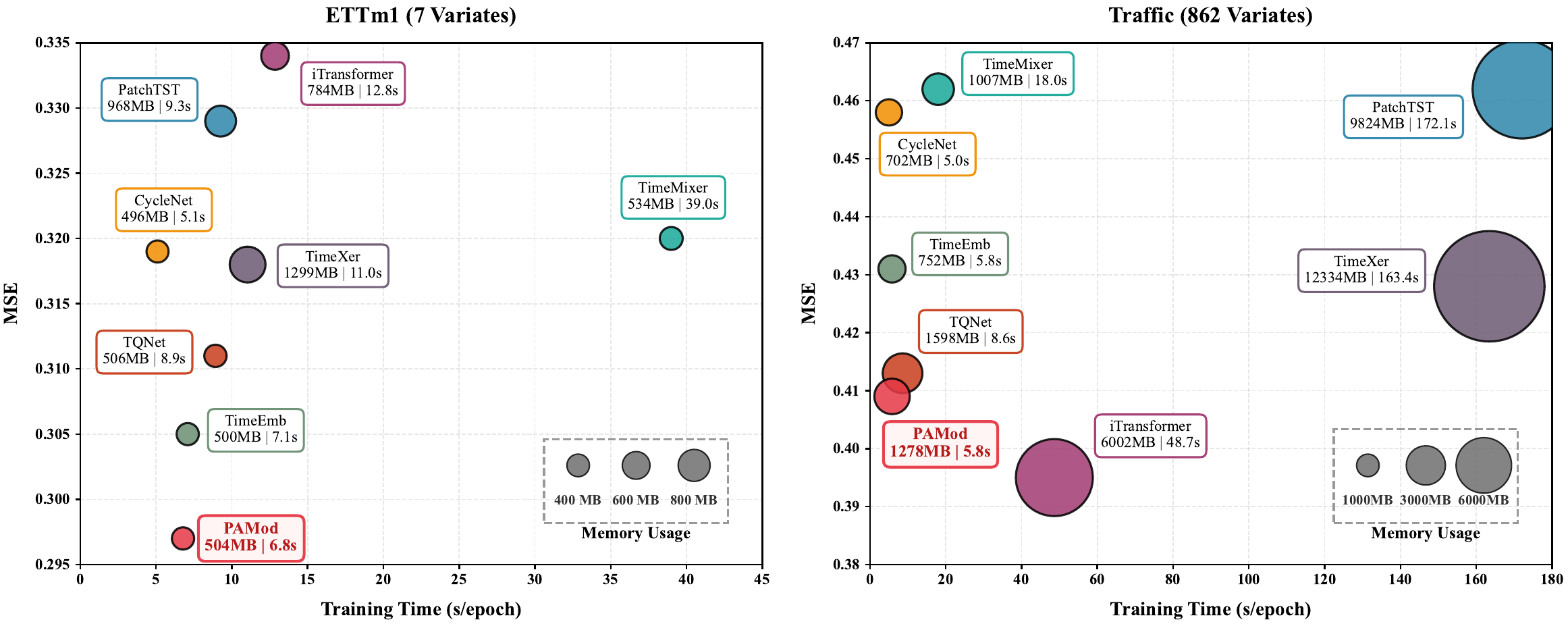}
\vspace{-0.15cm}
\caption{Computational efficiency under the input-96-predict-96 setting. Comparison of forecasting performance (MSE), memory usage, and training time on the ETTm1 and Traffic datasets. We set the batch sizes to 32 and 16 for the ETTm1 and Traffic datasets, respectively.}
\label{fig:3}
\vspace{-0.35cm}
\end{figure*}

\textbf{Model Efficiency}.
To evaluate the efficiency of PAMod, we record the training time and memory footprint of PAMod with seven baseline models under the same batch size for a fair comparison.
As shown in Figure \ref{fig:3}, PAMod demonstrates a superior balance of prediction accuracy, training speed, and memory efficiency across both small- and large-scale forecasting scenarios. 
On the ETTm1 dataset (7 Variates), it achieves the lowest MSE (0.297) while maintaining training time second only to CycleNet and moderate memory consumption. When evaluated in the Traffic dataset (862 variates), PAMod not only retains competitive accuracy but also exhibits near-optimal training speed and drastically reduced memory usage—surpassing strong baselines by more than an order of magnitude in both time and memory efficiency. 
These results underscore PAMod’s scalable and resource-efficient design, making it well-suited for real-world deployment where computational constraints are as critical as predictive performance.

\begin{table*}[hb]
\centering
\vspace{-0.9cm}
\caption{Ablation studies. The lookback length and prediction horizon are set to $96$ for all the datasets.}
\vspace{-0.2cm}
\label{tab:2}
\scriptsize
\setlength{\tabcolsep}{1.2mm}{
\begin{tabular}{c|cc|cccccccccccc|cccc}
\specialrule{0.15em}{0pt}{1pt}
\multirow{3}{*}{Datasets} & \multicolumn{2}{c|}{\multirow{2}{*}{\textbf{PAMod}}} & \multicolumn{12}{c|}{Modulation Variants}                                                                                                                                                                                     & \multicolumn{4}{c}{Training loss}                            \\ 
\cmidrule(lr){4-15}\cmidrule(lr){16-19}
                          & \multicolumn{2}{c|}{}                                 & \multicolumn{2}{c|}{w/o $\phi(\tau)$}     & \multicolumn{2}{c|}{w/o $\alpha(\tau)$} & \multicolumn{2}{c|}{r/w LRC}       & \multicolumn{2}{l|}{r/w $\sin(\tau)$} & \multicolumn{2}{c|}{s/w $\phi(\tau)~\&~\alpha(\tau)$}          & \multicolumn{2}{c|}{w/o modulation} & \multicolumn{2}{c|}{MAE}           & \multicolumn{2}{c}{MSE} \\ 
                          \cmidrule(lr){2-19}
                          & MSE                       & MAE                       & MSE   & \multicolumn{1}{c|}{MAE}   & MSE   & \multicolumn{1}{c|}{MAE}   & MSE   & \multicolumn{1}{c|}{MAE}   & MSE    & \multicolumn{1}{c|}{MAE}   & MSE   & \multicolumn{1}{c|}{MAE}   & MSE              & MAE              & MSE   & \multicolumn{1}{c|}{MAE}   & MSE        & MAE        \\ 
                          \specialrule{0.10em}{0.5pt}{0.5pt}
ETTm1                     & \textbf{0.297}            & \textbf{0.337}            & 0.302 & \multicolumn{1}{c|}{0.341} & 0.304 & \multicolumn{1}{c|}{0.342} & 0.305 & \multicolumn{1}{c|}{0.344} & 0.308  & \multicolumn{1}{c|}{0.347} & 0.301 & \multicolumn{1}{c|}{0.340} & 0.324            & 0.355            & 0.294 & \multicolumn{1}{c|}{0.331} & 0.296      & 0.340      \\ 
\specialrule{0.10em}{0.5pt}{0.5pt}
ETTm2                     & \textbf{0.162}            & \textbf{0.240}            & 0.168 & \multicolumn{1}{c|}{0.245} & 0.168 & \multicolumn{1}{c|}{0.246} & 0.163 & \multicolumn{1}{c|}{0.241} & 0.168  & \multicolumn{1}{c|}{0.245} & 0.163 & \multicolumn{1}{c|}{0.241} & 0.175            & 0.253            & 0.164 & \multicolumn{1}{c|}{0.240} & 0.164      & 0.242      \\ 
\specialrule{0.10em}{0.5pt}{0.5pt}
ETTh1                     & \textbf{0.357}            & \textbf{0.382}            & 0.361 & \multicolumn{1}{c|}{0.384} & 0.365 & \multicolumn{1}{c|}{0.385} & 0.359 & \multicolumn{1}{c|}{0.383} & 0.363  & \multicolumn{1}{c|}{0.386} & 0.359 & \multicolumn{1}{c|}{0.383} & 0.371            & 0.387            & 0.362 & \multicolumn{1}{c|}{0.379} & 0.360      & 0.386      \\ 
\specialrule{0.10em}{0.5pt}{0.5pt}
ETTh2                     & \textbf{0.279}            & \textbf{0.328}            & 0.284 & \multicolumn{1}{c|}{0.330} & 0.287 & \multicolumn{1}{c|}{0.335} & 0.280 & \multicolumn{1}{c|}{0.327} & 0.284  & \multicolumn{1}{c|}{0.330} & 0.281 & \multicolumn{1}{c|}{0.328} & 0.298            & 0.341            & 0.284 & \multicolumn{1}{c|}{0.330} & 0.283      & 0.330      \\ 
\specialrule{0.10em}{0.5pt}{0.5pt}
ECL                       & \textbf{0.136}            & \textbf{0.226}            & 0.139 & \multicolumn{1}{c|}{0.228} & 0.156 & \multicolumn{1}{c|}{0.245} & 0.141 & \multicolumn{1}{c|}{0.229} & 0.145  & \multicolumn{1}{c|}{0.232} & 0.142 & \multicolumn{1}{c|}{0.230} & 0.181            & 0.256            & 0.138 & \multicolumn{1}{c|}{0.225} & 0.137      & 0.229      \\ 
\specialrule{0.10em}{0.5pt}{0.5pt}
Traffic                   & \textbf{0.409}            & \textbf{0.256}            & 0.413 & \multicolumn{1}{c|}{0.259} & 0.418 & \multicolumn{1}{c|}{0.265} & 0.417 & \multicolumn{1}{c|}{0.256} & 0.425  & \multicolumn{1}{c|}{0.262} & 0.420 & \multicolumn{1}{c|}{0.257} & 0.497            & 0.310            & 0.412 & \multicolumn{1}{c|}{0.250} & 0.410      & 0.258      \\ 
\specialrule{0.10em}{0.5pt}{0.5pt}
Weather                   & \textbf{0.153}            & \textbf{0.191}            & 0.156 & \multicolumn{1}{c|}{0.194} & 0.168 & \multicolumn{1}{c|}{0.203} & 0.153 & \multicolumn{1}{c|}{0.191} & 0.158  & \multicolumn{1}{c|}{0.194} & 0.156 & \multicolumn{1}{c|}{0.193} & 0.194            & 0.224            & 0.154 & \multicolumn{1}{c|}{0.191} & 0.155      & 0.193      \\ 
\specialrule{0.10em}{0.5pt}{0.5pt}
Solar                     & \textbf{0.184}            & \textbf{0.209}            & 0.188 & \multicolumn{1}{c|}{0.212} & 0.188 & \multicolumn{1}{c|}{0.205} & 0.197 & \multicolumn{1}{c|}{0.216} & 0.199  & \multicolumn{1}{c|}{0.220} & 0.185 & \multicolumn{1}{c|}{0.210} & 0.239            & 0.261            & 0.191 & \multicolumn{1}{c|}{0.207} & 0.184      & 0.210      \\ 
\specialrule{0.10em}{0.5pt}{0.5pt}
PEMS03                    & \textbf{0.133}            & \textbf{0.228}            & 0.139 & \multicolumn{1}{c|}{0.234} & 0.182 & \multicolumn{1}{c|}{0.280} & 0.133 & \multicolumn{1}{c|}{0.228} & 0.142  & \multicolumn{1}{c|}{0.238} & 0.135 & \multicolumn{1}{c|}{0.230} & 0.247            & 0.346            & 0.132 & \multicolumn{1}{c|}{0.226} & 0.132      & 0.230      \\ 
\specialrule{0.10em}{0.5pt}{0.5pt}
PEMS04                    & \textbf{0.111}            & \textbf{0.212}            & 0.114 & \multicolumn{1}{c|}{0.216} & 0.188 & \multicolumn{1}{c|}{0.287} & 0.115 & \multicolumn{1}{c|}{0.215} & 0.123  & \multicolumn{1}{c|}{0.219} & 0.113 & \multicolumn{1}{c|}{0.214} & 0.278            & 0.365            & 0.112 & \multicolumn{1}{c|}{0.212} & 0.113      & 0.213      \\ 
\specialrule{0.10em}{0.5pt}{0.5pt}
PEMS07                    & \textbf{0.116}            & \textbf{0.199}            & 0.121 & \multicolumn{1}{c|}{0.203} & 0.172 & \multicolumn{1}{c|}{0.268} & 0.122 & \multicolumn{1}{c|}{0.207} & 0.132  & \multicolumn{1}{c|}{0.211} & 0.128 & \multicolumn{1}{c|}{0.212} & 0.317            & 0.373            & 0.114 & \multicolumn{1}{c|}{0.198} & 0.110      & 0.201      \\ 
\specialrule{0.10em}{0.5pt}{0.5pt}
PEMS08                    & \textbf{0.181}            & \textbf{0.223}            & 0.186 & \multicolumn{1}{c|}{0.228} & 0.210 & \multicolumn{1}{c|}{0.283} & 0.189 & \multicolumn{1}{c|}{0.226} & 0.205  & \multicolumn{1}{c|}{0.247} & 0.195 & \multicolumn{1}{c|}{0.262} & 0.348            & 0.378            & 0.179 & \multicolumn{1}{c|}{0.224} & 0.175      & 0.226      \\ 
\specialrule{0.10em}{0.5pt}{0.5pt}
Avg                       & \textbf{0.210}            & \textbf{0.253}            & 0.214 & \multicolumn{1}{c|}{0.256} & 0.234 & \multicolumn{1}{c|}{0.279} & 0.215 & \multicolumn{1}{c|}{0.255} & 0.221  & \multicolumn{1}{c|}{0.261} & 0.215 & \multicolumn{1}{c|}{0.258} & 0.289            & 0.321            & 0.212 & \multicolumn{1}{c|}{0.251} & 0.210      & 0.255      \\ 
\specialrule{0.15em}{1pt}{0pt}
\end{tabular}}
\vspace{-0.3cm}
\end{table*}

\subsection{Ablation Studies}
\label{subsec:4.3}
To investigate the role of each component in PAMod, we perform comprehensive ablation studies on the modulation mechanism, training loss, and cycle length.

\textbf{Components}.~~
As shown in Table~\ref{tab:2}, $\phi(\tau)$ and $\alpha(\tau)$ denote cycle-based phase bias and amplitude modulation, respectively. 
`r/w LRC' introduces the learnable recurrent cycle as CycleNet's to replace $\phi(\tau)$ and $\alpha(\tau)$.
Meanwhile, we also design the $\tau$-based Sinusoidal function to replace $\phi(\tau)$ and $\alpha(\tau)$ as `r/w $\sin(\tau)$', and swap $\phi(\tau)$ and $\alpha(\tau)$ as `s/w $\phi(\tau)~\&~\alpha(\tau)$'.
The results of Modulation Variants in Table~\ref{tab:2} are inferior to those of PAMod, indicating the effectiveness of our phase-amplitude modulation mechanism.
Moreover,  the pure MLP model (i.e., w/o modulation) obtains the worst results, confirming the importance of explicitly modeling cyclical shifts.
Additionally, adopting either MAE or MSE loss has a negligible impact on the forecasting performance, which further highlights that the accuracy improvements of PAMod originate primarily from its explicit modeling of cycle-dependent phase and amplitude dynamics.

\newpage

\begin{table*}[htbp]
\centering
\caption{Performance comparison of integrating our PAMod mechanism with different backbone models on the ETTh2 and ETTm1 datasets. The best results are highlighted in \textbf{bold}, and ``Imp." denotes the performance improvement achieved by incorporating PAMod.}
\vspace{-0.1cm}
\label{tab:3}
\scriptsize
\setlength{\tabcolsep}{1.5mm}{
\begin{tabular}{c|cccccccc|cccccccc}
\specialrule{0.15em}{0pt}{1pt}
Dataset      & \multicolumn{8}{c|}{ETTh2}                                                                                                                                                                           & \multicolumn{8}{c}{ETTm1}                                                                                                                                                                            \\ 
\specialrule{0.10em}{0.5pt}{0.5pt}
Horizon      & \multicolumn{2}{c|}{96}                              & \multicolumn{2}{c|}{192}                             & \multicolumn{2}{c|}{336}                             & \multicolumn{2}{c|}{720}        & \multicolumn{2}{c|}{96}                              & \multicolumn{2}{c|}{192}                             & \multicolumn{2}{c|}{336}                             & \multicolumn{2}{c}{720}         \\ 
\specialrule{0.10em}{0.5pt}{0.5pt}
Metric       & MSE            & \multicolumn{1}{c|}{MAE}            & MSE            & \multicolumn{1}{c|}{MAE}            & MSE            & \multicolumn{1}{c|}{MAE}            & MSE            & MAE            & MSE            & \multicolumn{1}{c|}{MAE}            & MSE            & \multicolumn{1}{c|}{MAE}            & MSE            & \multicolumn{1}{c|}{MAE}            & MSE            & MAE            \\ 
\specialrule{0.10em}{0.5pt}{0.5pt}
DLinear \citeyearpar{DBLP:conf/aaai/ZengCZ023}     & 0.333          & \multicolumn{1}{c|}{0.387}          & 0.477          & \multicolumn{1}{c|}{0.476}          & 0.594          & \multicolumn{1}{c|}{0.541}          & 0.831          & 0.657          & 0.345          & \multicolumn{1}{c|}{0.372}          & 0.380          & \multicolumn{1}{c|}{0.389}          & 0.413          & \multicolumn{1}{c|}{0.413}          & 0.474          & 0.453          \\
+PAMod       & \textbf{0.312} & \multicolumn{1}{c|}{\textbf{0.342}} & \textbf{0.426} & \multicolumn{1}{c|}{\textbf{0.433}} & \textbf{0.445} & \multicolumn{1}{c|}{\textbf{0.472}} & \textbf{0.484} & \textbf{0.489} & \textbf{0.317} & \multicolumn{1}{c|}{\textbf{0.348}} & \textbf{0.355} & \multicolumn{1}{c|}{\textbf{0.372}} & \textbf{0.395} & \multicolumn{1}{c|}{\textbf{0.392}} & \textbf{0.448} & \textbf{0.432} \\ 
\specialrule{0.10em}{0.5pt}{0.5pt}
Imp.         & 6.3\%          & \multicolumn{1}{c|}{11.6\%}         & 10.7\%         & \multicolumn{1}{c|}{9.0\%}          & 25.1\%         & \multicolumn{1}{c|}{12.8\%}         & 41.8\%         & 25.6\%         & 8.1\%          & \multicolumn{1}{c|}{6.5\%}          & 6.6\%          & \multicolumn{1}{c|}{4.4\%}          & 4.4\%          & \multicolumn{1}{c|}{5.1\%}          & 5.5\%          & 4.6\%          \\ 
\specialrule{0.10em}{0.5pt}{0.5pt}
PatchTST \citeyearpar{DBLP:conf/iclr/NieNSK23}    & 0.302          & \multicolumn{1}{c|}{0.348}          & 0.388          & \multicolumn{1}{c|}{0.400}          & 0.426          & \multicolumn{1}{c|}{0.433}          & 0.431          & 0.466          & 0.329          & \multicolumn{1}{c|}{0.367}          & 0.367          & \multicolumn{1}{c|}{0.385}          & 0.399          & \multicolumn{1}{c|}{0.410}          & 0.545          & 0.439          \\
+PAMod       & \textbf{0.293} & \multicolumn{1}{c|}{\textbf{0.334}} & \textbf{0.365} & \multicolumn{1}{c|}{\textbf{0.388}} & \textbf{0.401} & \multicolumn{1}{c|}{\textbf{0.418}} & \textbf{0.419} & \textbf{0.438} & \textbf{0.312} & \multicolumn{1}{c|}{\textbf{0.345}} & \textbf{0.352} & \multicolumn{1}{c|}{\textbf{0.370}} & \textbf{0.380} & \multicolumn{1}{c|}{\textbf{0.394}} & \textbf{0.478} & \textbf{0.428} \\ 
\specialrule{0.10em}{0.5pt}{0.5pt}
Imp.         & 3.0\%          & \multicolumn{1}{c|}{4.0\%}          & 5.9\%          & \multicolumn{1}{c|}{3.0\%}          & 5.9\%          & \multicolumn{1}{c|}{3.5\%}          & 2.8\%          & 6.0\%          & 5.2\%          & \multicolumn{1}{c|}{6.0\%}          & 4.1\%          & \multicolumn{1}{c|}{3.9\%}          & 4.8\%          & \multicolumn{1}{c|}{3.9\%}          & 12.3\%         & 2.5\%          \\
\specialrule{0.10em}{0.5pt}{0.5pt}
iTransformer \citeyearpar{DBLP:conf/iclr/LiuHZWWML24} & 0.297          & \multicolumn{1}{c|}{0.349}          & 0.380          & \multicolumn{1}{c|}{0.400}          & 0.428          & \multicolumn{1}{c|}{0.432}          & 0.427          & 0.445          & 0.334          & \multicolumn{1}{c|}{0.368}          & 0.377          & \multicolumn{1}{c|}{0.391}          & 0.426          & \multicolumn{1}{c|}{0.420}          & 0.491          & 0.459          \\
+PAMod       & \textbf{0.284} & \multicolumn{1}{c|}{\textbf{0.332}} & \textbf{0.363} & \multicolumn{1}{c|}{\textbf{0.385}} & \textbf{0.406} & \multicolumn{1}{c|}{\textbf{0.414}} & \textbf{0.410} & \textbf{0.431} & \textbf{0.312} & \multicolumn{1}{c|}{\textbf{0.349}} & \textbf{0.354} & \multicolumn{1}{c|}{\textbf{0.376}} & \textbf{0.393} & \multicolumn{1}{c|}{\textbf{0.399}} & \textbf{0.446} & \textbf{0.434} \\ 
\specialrule{0.10em}{0.5pt}{0.5pt}
Imp.         & 4.4\%          & \multicolumn{1}{c|}{4.9\%}          & 4.5\%          & \multicolumn{1}{c|}{3.8\%}          & 5.1\%          & \multicolumn{1}{c|}{4.2\%}          & 4.0\%          & 3.1\%          & 6.6\%          & \multicolumn{1}{c|}{5.2\%}          & 6.1\%          & \multicolumn{1}{c|}{3.8\%}          & 7.7\%          & \multicolumn{1}{c|}{5.0\%}          & 9.2\%          & 5.4\%          \\ 
\specialrule{0.10em}{0.5pt}{0.5pt}
TQNet \citeyearpar{DBLP:conf/icml/LinCWQL25}       & 0.295          & \multicolumn{1}{c|}{0.343}          & 0.367          & \multicolumn{1}{c|}{0.393}          & 0.417          & \multicolumn{1}{c|}{0.427}          & 0.433          & 0.446          & 0.311          & \multicolumn{1}{c|}{0.353}          & 0.356          & \multicolumn{1}{c|}{0.378}          & 0.390          & \multicolumn{1}{c|}{0.401}          & 0.452          & 0.440          \\
+PAMod       & \textbf{0.280} & \multicolumn{1}{c|}{\textbf{0.329}} & \textbf{0.358} & \multicolumn{1}{c|}{\textbf{0.378}} & \textbf{0.399} & \multicolumn{1}{c|}{\textbf{0.413}} & \textbf{0.412} & \textbf{0.433} & \textbf{0.299} & \multicolumn{1}{c|}{\textbf{0.337}} & \textbf{0.347} & \multicolumn{1}{c|}{\textbf{0.369}} & \textbf{0.375} & \multicolumn{1}{c|}{\textbf{0.388}} & \textbf{0.435} & \textbf{0.425} \\ 
\specialrule{0.10em}{0.5pt}{0.5pt}
Imp.         & 5.1\%          & \multicolumn{1}{c|}{4.1\%}          & 2.5\%          & \multicolumn{1}{c|}{3.8\%}          & 4.3\%          & \multicolumn{1}{c|}{3.3\%}          & 4.8\%          & 2.9\%          & 3.9\%          & \multicolumn{1}{c|}{4.5\%}          & 2.5\%          & \multicolumn{1}{c|}{2.4\%}          & 3.8\%          & \multicolumn{1}{c|}{3.2\%}          & 3.8\%          & 3.4\%          \\ 
\specialrule{0.15em}{1pt}{0pt}
\end{tabular}}
\vspace{-0.2cm}
\end{table*}

\begin{figure}[ht]
\centering
\includegraphics[width=1.0\linewidth]{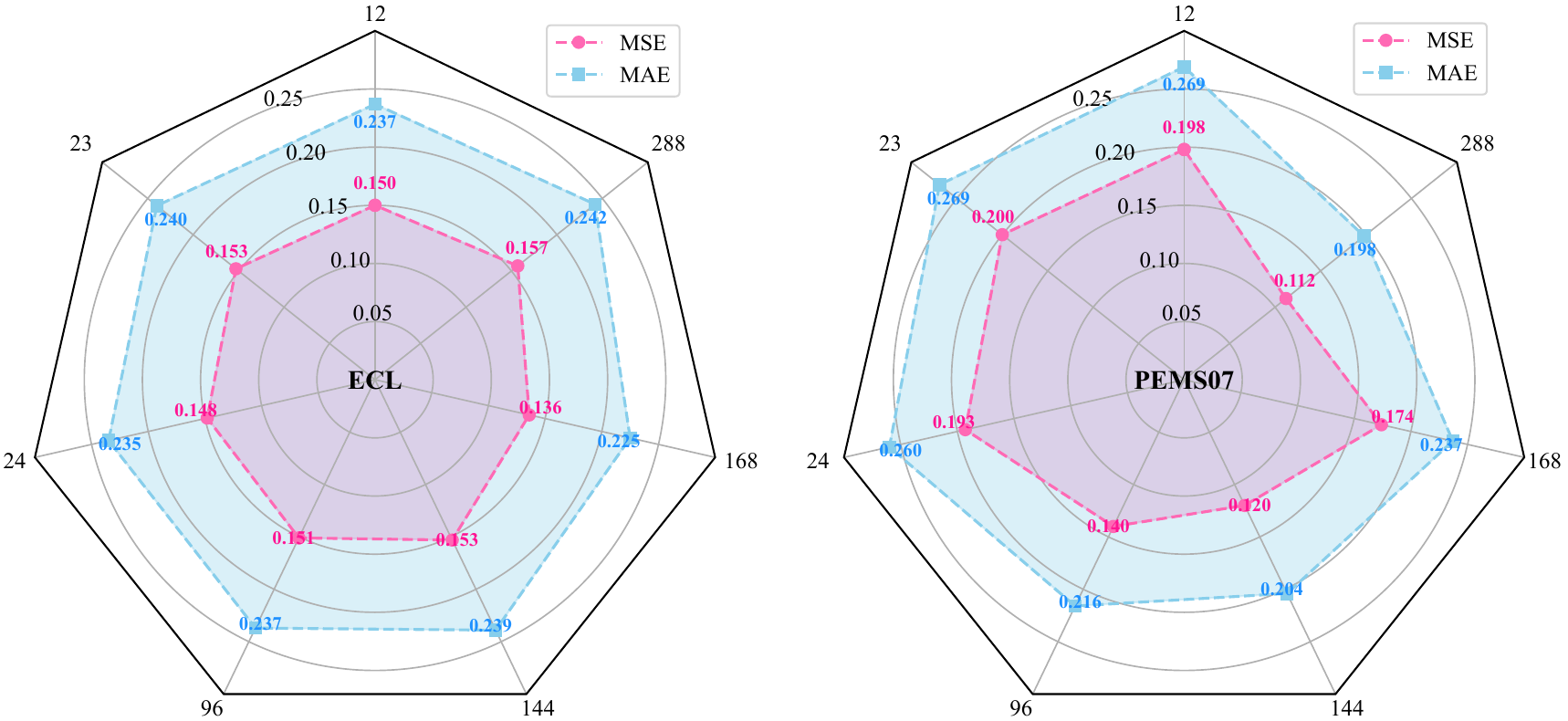}
\vspace{-0.35cm}
\caption{Performance of PAMod on the ECL and PEMS07 with varying cycle length $\tau\in\{12, 23, 24, 96, 144, 168, 288\}$.}
\label{fig:4}
\vspace{-0.15cm}
\end{figure}

\textbf{Cycle Lengths}.~
The learnable phase and amplitude embeddings $\phi(\tau)$ and $\alpha(\tau)$ in our PAMod mechanism are explicitly conditioned on the cycle length $\tau$. 
To investigate its critical role, we conduct a systematic study of $\tau$ across the ECL and PEMS07 datasets. 
As shown in Figure \ref{fig:4}, performance peaks when $\tau$ aligns with the inherent periodicity defined by the dataset's recording frequency.
For instance, in the hourly-sampled ECL dataset, the daily cycle ($\tau=24$) substantially outperforms non-periodic lengths, and the weekly cycle ($\tau=24\times 7=168$) achieves optimal results. 
These findings confirm that distribution shifts are structurally aligned with the maximum periodicity \cite{DBLP:conf/nips/Lin0HWMZ24, DBLP:conf/icml/LinCWQL25}. 
Therefore, aligning $\tau$ accordingly is essential: it allows our model to explicitly capture these structured cyclical variations, which is the key to improving forecasting performance under non-stationarity.

\subsection{Compatibility Analysis}
\label{subsec:4.4}
To evaluate the compatibility and versatility of our phase-amplitude modulation (PAMod) mechanism, we seamlessly integrate it as a plug-and-play module into a diverse set of backbones, including both MLP-based and Transformer-based architectures. 
Crucially, PAMod is designed to be architecture-agnostic, capable of functioning as a complementary learnable embedding module or as a direct replacement for existing normalization techniques.

\begin{table}[ht]
\centering
\caption{MSE error comparison with normalization methods.}
\vspace{-0.1cm}
\label{tab:4}
\scriptsize
\setlength{\tabcolsep}{0.9mm}{
\begin{tabular}{cc|cccc|cccc}
\specialrule{0.15em}{0pt}{1pt}
\multicolumn{2}{c|}{Backbone}                                            & \multicolumn{4}{c|}{Autoformer \citeyearpar{DBLP:conf/nips/WuXWL21}}        & \multicolumn{4}{c}{SparseTSF \citeyearpar{DBLP:conf/icml/Lin0WCY24}}          \\ 
\specialrule{0.10em}{0.5pt}{0.5pt}
\multicolumn{2}{c|}{Norm Type}                                           & RevIN & SAN   & DDN   & \textbf{PAMod} & RevIN & SAN   & DDN   & PAMod          \\ 
\specialrule{0.10em}{0.5pt}{0.5pt}
\multicolumn{1}{c|}{\multirow{5}{*}{Weather}} & 96                       & 0.212 & 0.194 & 0.190 & \textbf{0.188} & 0.186 & 0.182 & 0.178 & \textbf{0.164} \\
\multicolumn{1}{c|}{}                         & 192                      & 0.264 & 0.258 & 0.231 & \textbf{0.227} & 0.231 & 0.229 & 0.224 & \textbf{0.216} \\
\multicolumn{1}{c|}{}                         & 336                      & 0.309 & 0.329 & 0.289 & \textbf{0.281} & 0.285 & 0.283 & 0.279 & \textbf{0.273} \\
\multicolumn{1}{c|}{}                         & 720                      & 0.377 & 0.440 & 0.369 & \textbf{0.354} & 0.360 & 0.362 & 0.358 & \textbf{0.351} \\ 
\cmidrule(lr){2-10}
\multicolumn{1}{c|}{}                         & \multicolumn{1}{l|}{Avg} & 0.291 & 0.305 & 0.270 & \textbf{0.263} & 0.266 & 0.264 & 0.260 & \textbf{0.251} \\ 
\specialrule{0.10em}{0.5pt}{0.5pt}
\multicolumn{1}{c|}{\multirow{5}{*}{ECL}}     & 96                       & 0.179 & 0.172 & 0.150 & \textbf{0.148} & 0.202 & 0.189 & 0.185 & \textbf{0.137} \\
\multicolumn{1}{c|}{}                         & 192                      & 0.216 & 0.195 & 0.173 & \textbf{0.169} & 0.199 & 0.192 & 0.188 & \textbf{0.154} \\
\multicolumn{1}{c|}{}                         & 336                      & 0.233 & 0.211 & 0.185 & \textbf{0.180} & 0.212 & 0.201 & 0.194 & \textbf{0.170} \\
\multicolumn{1}{c|}{}                         & 720                      & 0.246 & 0.236 & 0.201 & \textbf{0.199} & 0.253 & 0.248 & 0.235 & \textbf{0.208} \\ 
\cmidrule(lr){2-10}
\multicolumn{1}{c|}{}                         & \multicolumn{1}{l|}{Avg} & 0.219 & 0.204 & 0.177 & \textbf{0.174} & 0.217 & 0.208 & 0.201 & \textbf{0.167} \\ 
\specialrule{0.15em}{1pt}{0pt}
\end{tabular}}
\vspace{-0.3cm}
\end{table}

As shown in Tables \ref{tab:3}, PAMod consistently improves performance when integrated with distinct backbones on ETTh2 and ETTm1.
To be concrete, PAMod reduces MSE by 2.5\%–41.8\% and MAE by 2.4\%–25.6\% compared to respective backbones alone.
In Table \ref{tab:4}, PAMod also outperforms specialized normalization methods, i.e., RevIN \cite{DBLP:conf/iclr/KimKTPCC22}, SAN \cite{DBLP:conf/nips/LiuCLHLXC23}, and DDN \cite{DBLP:conf/nips/0001WLLYXZ24}, on the Weather and ECL datasets.
These results validate that PAMod is not architecture-specific. Its ability to flexibly enhance diverse backbones stems from its design: the phase-amplitude modulation acts on general feature representations, making it a lightweight, plug-in module for improving non-stationary time series forecasting.

\begin{figure}[hb]
\centering
\includegraphics[width=1.0\linewidth]{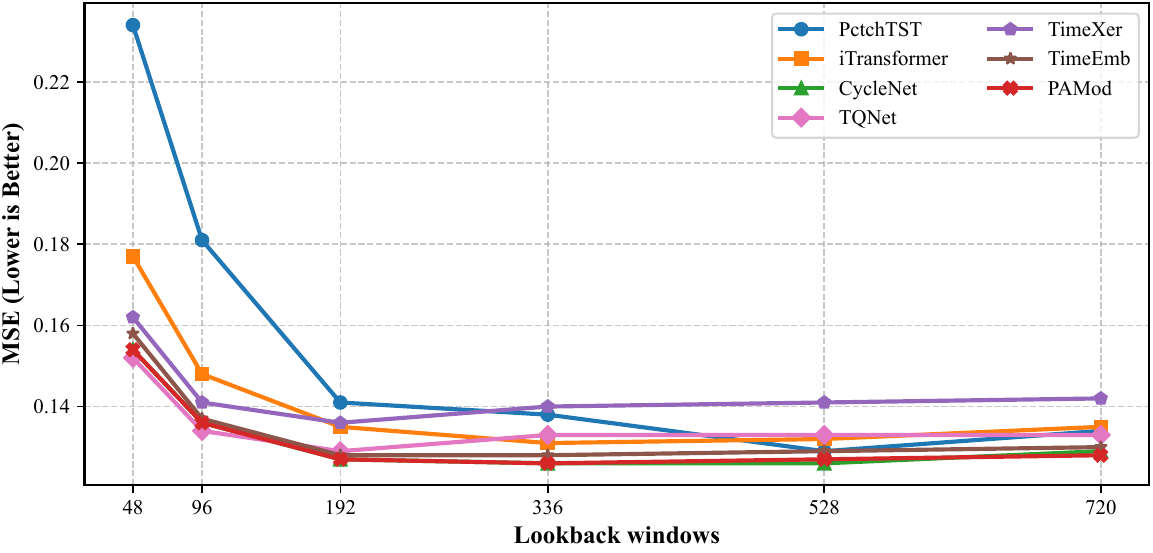}
\vspace{-0.45cm}
\caption{Performance of different methods under varying lookback windows $T\in\{48, 96, 192, 336, 528, 720\}$ on the ECL dataset. The prediction horizon is fixed as 96.}
\label{fig:5}
\vspace{-0.1cm}
\end{figure}

The length of the lookback window determines the richness of historical information available to the model. To understand this relationship, we investigate the model's sensitivity to different lookback window lengths.
As shown in Figure \ref{fig:5}, PAMod's performance scales positively with the lookback window length while maintaining a consistent lead over baselines.
This scaling effect demonstrates the efficacy of using learnable embeddings to model distribution shifts. 
By dynamically capturing the evolving phase (mean) and amplitude (variance) of cyclical patterns, these embeddings provide a stable representational basis that makes long historical sequences interpretable, enabling the model to leverage extended contexts and capture genuine long-term dependencies.
See Appendix \ref{app:c} for more results.

\subsection{Interpretable Cases}
\label{subsec:4.5}

To gain an intuitive understanding of PAMod's capability to model cyclical shifts, we visualize both the learned embedding weights and the corresponding prediction outcomes.

\begin{figure}[htbp]
\centering
\includegraphics[width=1.0\linewidth]{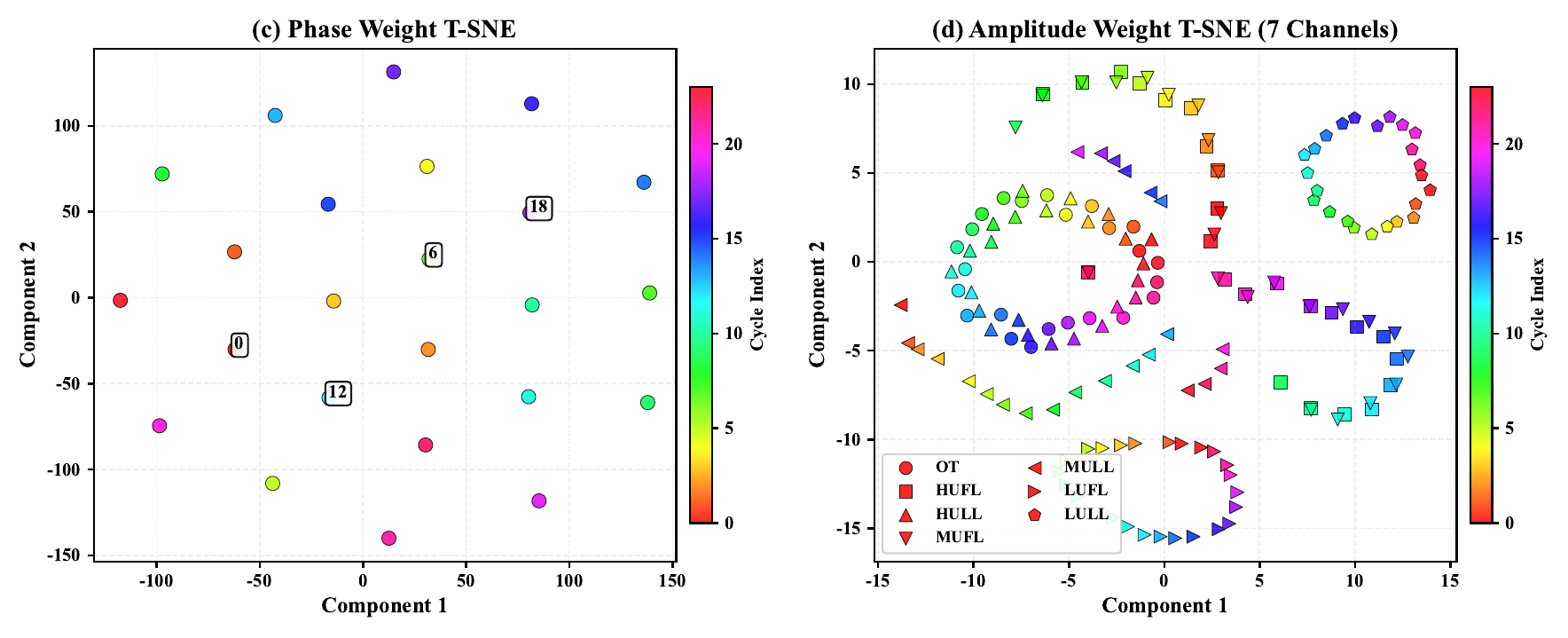}
\vspace{-0.35cm}
\caption{t-SNE \cite{maaten2008visualizing} visualization of learned phase and amplitude weights on the ETTh1 dataset.}
\label{fig:6}
\end{figure}

\textbf{Phase-Amplitude Weights}.~ As shown in Figure \ref{fig:6}, the t-SNE projections of the phase and amplitude weights exhibit a clear structural disentanglement, directly reflecting their designed roles. 
The phase weight cluster is strictly aligned according to the cycle index, demonstrating its primary function in capturing temporal mean shifts. 
Conversely, the amplitude weight forms distinct clusters based on channel identity, indicating that it learns to modulate channel-specific variance. 
Such clean separation confirms that our phase-amplitude modulation successfully decouples the temporal (phase) and channel-wise (amplitude) factors of distribution shifts, providing an explicit and interpretable mechanism to model non-stationarity in multivariate time series.

\begin{figure}[htbp]
\centering
\includegraphics[width=1.0\linewidth]{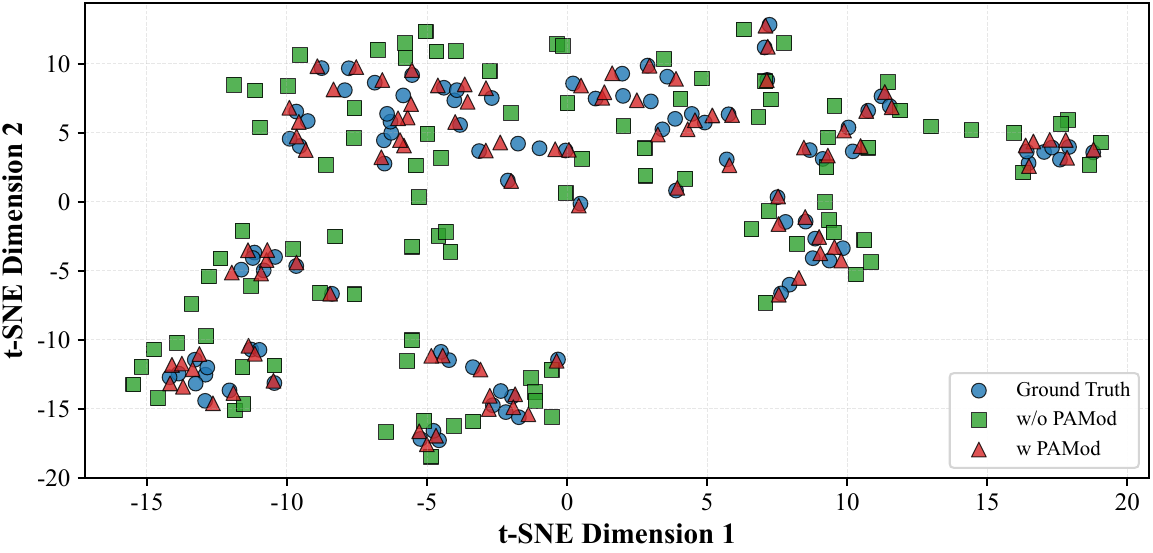}
\vspace{-0.35cm}
\caption{Comparison of prediction and true distributions using t-SNE on the ETTh1 dataset.}
\label{fig:7}
\vspace{-0.3cm}
\end{figure}

\textbf{Distribution Alignment}.~ 
Figure \ref{fig:7} provides geometric evidence that PAMod aligns predictions with the ground truth under distribution shift. 
The ground truth forms a compact, structured cluster, representing the intrinsic data manifold.  
Predictions with PAMod closely overlap this cluster, maintaining its structure and continuity, indicating high distributional consistency. 
In contrast, the predictions without PAMod are widely scattered and deviate markedly from the true manifold.
This visual evidence validates that our phase-amplitude mechanism explicitly models temporal distribution drift, therefore enabling predictions to reside on the true data manifold.
Furthermore, Figure \ref{fig:8} shows that PAMod achieves a closer fit to the ground truth than TimeEmb, accurately replicating both its long-term trend and local variations.
This confirms its enhanced capacity for modeling complex temporal dynamics and achieving fine-grained temporal alignment.

\begin{figure}[htbp]
\centering
\includegraphics[width=1.0\linewidth]{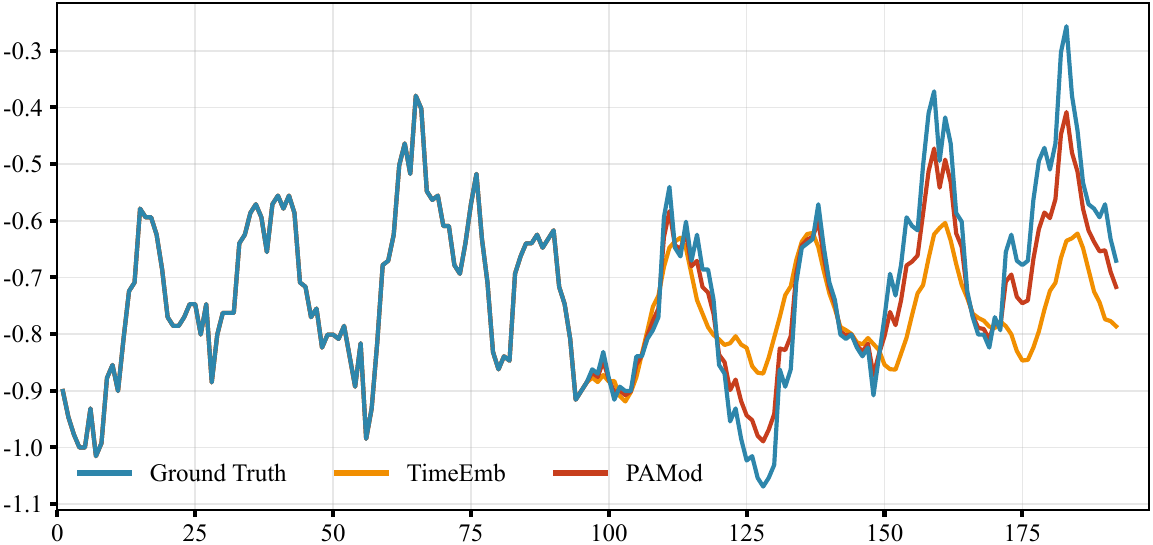}
\vspace{-0.35cm}
\caption{Visualization of forecasting performance between PAMod and TimeEmb on the ETTh1 dataset.}
\label{fig:8}
\vspace{-0.3cm}
\end{figure}

\section{Conclusion}
Real-world time series exhibit inherent non-stationarity, which often manifests as structured, cyclical distribution shifts over time. 
To address this challenge, we propose \textbf{PAMod}, a \textbf{P}hase‑\textbf{A}mplitude \textbf{Mod}ulation framework that explicitly models cyclical shifts in mean (phase) and variance (amplitude) through learnable periodic embeddings. 
Operating in normalized feature space, PAMod performs dynamic distribution adaptation equivalent to learnable denormalization, unifying shift modeling with representation learning in a lightweight, plug‑and‑play module. Extensive experiments validate that PAMod achieves state‑of‑the‑art forecasting accuracy across diverse real‑world benchmarks while maintaining high parameter efficiency, demonstrating that explicitly modeling cyclical non‑stationarity is both effective and essential for robust time series forecasting.
Future work may explore extending PAMod to model multi-scale periodicities simultaneously and adapt the modulation mechanism to other time series tasks.

\clearpage

\section*{Impact Statement}

This paper presents work whose goal is to advance the field of Machine Learning. 
There are many potential societal consequences of our work, none of which we feel must be specifically highlighted here.

\bibliography{example_paper}
\bibliographystyle{icml2026}

\newpage
\appendix
\onecolumn
\section{Performance Guarantee of PAMod}
\label{app:a}

In this section, we provide a formal theoretical justification for the Phase-Amplitude Modulation (PAMod) framework. 
We demonstrate that by conditioning the model on cyclical indices, PAMod minimizes the generalization error bound under periodic non-stationarity.

\subsection{Discrepancy under Domain Shift}

To analyze the shift between different cyclical distributions, we adopt the Discrepancy Distance ($d_{\text{disc}}$), which generalizes the $\mathcal{H}\Delta\mathcal{H}$-divergence from binary classification to continuous output spaces and general loss functions \cite{DBLP:conf/colt/MansourMR09}.

\begin{definition}
\textbf{(Discrepancy Distance)}. Let $\mathcal{H}$ be a hypothesis class of functions mapping $\mathcal{X} \to \mathcal{Y}$. The discrepancy between two distributions $\mathcal{D}_i$ and $\mathcal{D}_j$ is defined as:
\begin{equation}
    d_{\text{disc}}(\mathcal{D}_i, \mathcal{D}_j) = \sup_{h, h' \in \mathcal{H}} \left| \mathbb{E}_{x \sim \mathcal{D}_i} \mathcal{L}(h(x), h'(x)) - \mathbb{E}_{x \sim \mathcal{D}_j} \mathcal{L}(h(x), h'(x)) \right|,
\end{equation}
where $\mathcal{L}$ is the loss function (e.g., Mean Squared Error). This metric is particularly suited for our framework as it captures the variations in both mean (phase) and variance (amplitude) within the hypothesis space.
\end{definition}

\subsection{Generalization Bound under Periodic Shift}

We assume the time series follows the periodic location-scale shift defined in Equation \eqref{eq:14}: $X_t = \mu(\tau) + \sigma(\tau) \odot Z_t$, where $\tau = t \bmod L$ and $Z_t$ is a cycle-invariant latent process. Let $\psi(X, \tau)$ denote the PAMod operator, and $\hat{\mathcal{D}}_\tau$ be the distribution of the modulated features.

\begin{theorem}
\textbf{(Generalization Bound \cite{DBLP:conf/colt/MansourMR09, DBLP:journals/ml/Ben-DavidBCKPV10})}. For any cyclical position $\tau \in \{0, \dots, L-1\}$, the expected risk $\mathcal{R}_\tau(h)$ of the predictor is bounded by:
\begin{equation}
    \mathcal{R}_\tau(h) \leq \hat{\mathcal{R}}_S(h) + d_{\text{disc}}(\hat{\mathcal{D}}_\tau, \mathcal{D}_{ref}) + \lambda,
\end{equation}
where $\hat{\mathcal{R}}_S(h)$ is the empirical risk on the training set, $d_{\text{disc}}(\hat{\mathcal{D}}_\tau, \mathcal{D}_{ref})$ is the discrepancy between the modulated distribution $\hat{\mathcal{D}}_\tau$ and a stationary reference distribution $\mathcal{D}_{ref}$ (the distribution of $Z$), and $\lambda$ is the error of the ideal joint hypothesis.
\end{theorem}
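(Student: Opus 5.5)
The plan is to follow the standard domain-adaptation argument of \cite{DBLP:conf/colt/MansourMR09, DBLP:journals/ml/Ben-DavidBCKPV10}, taking the modulated distribution $\hat{\mathcal{D}}_\tau$ as the target domain and the stationary reference $\mathcal{D}_{ref}$ (the law of $Z$) as the source domain.

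First I fix notation. Write $\mathcal{R}_{\mathcal{D}}(h,h') = \mathbb{E}_{x\sim\mathcal{D}}\mathcal{L}(h(x),h'(x))$ for the disagreement between two hypotheses. Write $\mathcal{R}_{\mathcal{D}}(h)$ for the risk against the true labeling function $f_{\mathcal{D}}$. I assume that $\mathcal{L}$ is symmetric and satisfies the triangle inequality. This holds for the MAE loss used in training. For MSE it holds only up to a factor $2$, or exactly if one works with $\sqrt{\mathcal{L}}$; I will state this caveat explicitly. Next I define the ideal joint hypothesis
\[
h^\star \in \arg\min_{h\in\mathcal{H}} \bigl(\mathcal{R}_{\hat{\mathcal{D}}_\tau}(h) + \mathcal{R}_{\mathcal{D}_{ref}}(h)\bigr),
\]
and set $\lambda = \mathcal{R}_{\hat{\mathcal{D}}_\tau}(h^\star)+\mathcal{R}_{\mathcal{D}_{ref}}(h^\star)$.

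The core chain has three steps.
\begin{enumerate}
\item By the triangle inequality, $\mathcal{R}_\tau(h) \le \mathcal{R}_{\hat{\mathcal{D}}_\tau}(h,h^\star) + \mathcal{R}_{\hat{\mathcal{D}}_\tau}(h^\star)$.
\item Since both $h$ and $h^\star$ lie in $\mathcal{H}$, the definition of $d_{\text{disc}}$ gives directly $\mathcal{R}_{\hat{\mathcal{D}}_\tau}(h,h^\star) \le \mathcal{R}_{\mathcal{D}_{ref}}(h,h^\star) + d_{\text{disc}}(\hat{\mathcal{D}}_\tau,\mathcal{D}_{ref})$.
\item By the triangle inequality again, $\mathcal{R}_{\mathcal{D}_{ref}}(h,h^\star) \le \mathcal{R}_{\mathcal{D}_{ref}}(h) + \mathcal{R}_{\mathcal{D}_{ref}}(h^\star)$.
\end{enumerate}
Combining the three steps gives $\mathcal{R}_\tau(h) \le \mathcal{R}_{\mathcal{D}_{ref}}(h) + d_{\text{disc}} + \lambda$.

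Next I connect the source risk $\mathcal{R}_{\mathcal{D}_{ref}}(h)$ to the empirical risk $\hat{\mathcal{R}}_S(h)$. Under the location-scale model \eqref{eq:14}, normalization together with the modulation $\psi(\cdot,\tau)$ maps every training window, whatever its phase, onto features whose law is intended to be that of $Z$. So the training sample can be treated as drawn from $\mathcal{D}_{ref}$. A uniform convergence bound, for example via the Rademacher complexity of $\mathcal{L}\circ\mathcal{H}$, then gives $\mathcal{R}_{\mathcal{D}_{ref}}(h)\le \hat{\mathcal{R}}_S(h)+\epsilon_n$, where $\epsilon_n$ vanishes with the sample size. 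The displayed statement omits this term. I would therefore read $\hat{\mathcal{R}}_S$ as the source risk, or fold $\epsilon_n$ into the slack, and make this explicit in the proof.

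The main obstacle is this last step, together with the use of the triangle inequality. The stated bound has no concentration term, and MSE is not a metric. I would handle both by stating the precise assumptions up front: a loss satisfying the triangle inequality, or an extra factor of $2$ for MSE, and a finite-sample term that is either absorbed or taken to the population limit. No other step requires anything beyond the definition of $d_{\text{disc}}$.
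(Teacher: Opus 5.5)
Your three-step chain is the decomposition the paper's sketch only names (``by the triangle inequality of the discrepancy distance, the error on a target distribution can be decomposed into the source error and the distance between the two distributions''). It is correct for a symmetric loss satisfying the triangle inequality. Your caveats are legitimate, and the paper does not address them. One refinement on MSE: the relaxed inequality $\mathcal{L}(a,c)\le 2\mathcal{L}(a,b)+2\mathcal{L}(b,c)$ compounds through the chain, giving a factor $4$ on the source risk and $2$ on the discrepancy, not a single factor $2$. Most of the paper's sketch argues something else: that $E_A(\tau)\approx\sigma(\tau)^{-1}$ and $E_P(\tau)\approx-\mu(\tau)\sigma(\tau)^{-1}$ drive $d_{\text{disc}}(\psi(\mathcal{D}_\tau),\mathcal{D}_{ref})$ to zero. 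That explains why the bound is useful, but it is not needed to prove the inequality.

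The step that does not hold as written is the bridge to $\hat{\mathcal{R}}_S(h)$, and the problem is more than a missing $\epsilon_n$. Modulated training windows are drawn from the phase mixture $\bar{\mathcal{D}}=\frac{1}{L}\sum_{\tau'=0}^{L-1}\hat{\mathcal{D}}_{\tau'}$ (with phases equally represented), not from $\mathcal{D}_{ref}$.

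\begin{itemize}
\item Your justification, that every phase is mapped onto the law of $Z$, makes each $\hat{\mathcal{D}}_{\tau'}$ equal to $\mathcal{D}_{ref}$. But then the very term $d_{\text{disc}}(\hat{\mathcal{D}}_\tau,\mathcal{D}_{ref})$ you keep is zero.
\item Whenever that term is nonzero, uniform convergence controls $\mathcal{R}_{\bar{\mathcal{D}}}(h)$, not $\mathcal{R}_{\mathcal{D}_{ref}}(h)$. Moving from one to the other costs a further discrepancy term and a further joint-error term, because $d_{\text{disc}}$ only compares pairs $h,h'\in\mathcal{H}$, not risks against labels.
\end{itemize}

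So of your two fallbacks, only reading $\hat{\mathcal{R}}_S(h)$ as the population risk $\mathcal{R}_{\mathcal{D}_{ref}}(h)$ works; under that reading, your steps 1--3 are the whole proof. Folding $\epsilon_n$ into the slack does not fix the mismatch.

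If you want the literal training risk, run the same chain with source $\bar{\mathcal{D}}$. Then split the discrepancy using the triangle inequality and convexity of $d_{\text{disc}}$ (it is a supremum of absolute differences of expectations):
\[
d_{\text{disc}}(\hat{\mathcal{D}}_\tau,\bar{\mathcal{D}})\le d_{\text{disc}}(\hat{\mathcal{D}}_\tau,\mathcal{D}_{ref})+\frac{1}{L}\sum_{\tau'=0}^{L-1}d_{\text{disc}}(\hat{\mathcal{D}}_{\tau'},\mathcal{D}_{ref}).
\]
This gives the stated bound plus the averaged discrepancy term and $\epsilon_n$, with $\lambda$ now the joint error for the pair $(\hat{\mathcal{D}}_\tau,\bar{\mathcal{D}})$. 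Since training windows overlap, $\epsilon_n$ also needs a mixing assumption rather than an i.i.d.\ Rademacher bound.
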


\textbf{Proof Sketch}: By the triangle inequality of the discrepancy distance, the error on a target distribution $\mathcal{D}_\tau$ can be decomposed into the source error and the distance between the two distributions. PAMod implements a transformation $\psi(X, \tau) = X \odot E_A(\tau) + E_P(\tau)$. Under the assumption in Eq. (14), if the model learns $E_A(\tau) \approx \sigma(\tau)^{-1}$ and $E_P(\tau) \approx -\mu(\tau)\sigma(\tau)^{-1}$, the transformed distribution $\hat{\mathcal{D}}_\tau$ converges to the stationary distribution $\mathcal{D}_{ref}$. Consequently:
\begin{equation}
    \inf_{E_A, E_P} d_{\text{disc}}(\psi(\mathcal{D}_\tau), \mathcal{D}_{ref}) \to 0.
\end{equation}
Thus, PAMod minimizes the upper bound of the generalization error by explicitly neutralizing the cycle-dependent distribution discrepancy, which remains a large constant in models without cycle-awareness.

\subsection{Lipschitz Continuity and Stability}
To ensure that the modulation does not amplify input perturbations, we provide a stability guarantee.

\begin{lemma}
\textbf{(Lipschitz Continuity of PAMod)}.
The modulation operator $\psi(X, \tau)$ is Lipschitz continuous with respect to $X$. Specifically, if the linear projections $U_i$ have bounded weights $\|U\| \leq W_{\max}$, then:
\begin{equation}
    \|\psi(X_1, \tau) - \psi(X_2, \tau)\| \leq K \|X_1 - X_2\|,
\end{equation}
where $K$ depends on $W_{\max}$ and the norm of the embeddings.
\end{lemma}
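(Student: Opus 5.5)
We write the modulation operator explicitly as the composition used in the paper,
\[
\psi(X,\tau)=\mathrm{Dropout}\big(U_4\,[\mathrm{ReLU}(U_1X)\odot U_2E_A(\tau)+U_3E_P(\tau)]\big),
\]
where at inference Dropout is the identity and at training time it is a fixed diagonal mask with entries in $\{0,1/(1-p)\}$. The plan is to show that each factor of this composition is Lipschitz in $X$ for fixed $\tau$, and then multiply the constants.

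The first observation is that the phase term $U_3E_P(\tau)$ does not depend on $X$. It therefore cancels in the difference $\psi(X_1,\tau)-\psi(X_2,\tau)$, which leaves
\[
\psi(X_1,\tau)-\psi(X_2,\tau)=D\,U_4\big[(\mathrm{ReLU}(U_1X_1)-\mathrm{ReLU}(U_1X_2))\odot U_2E_A(\tau)\big],
\]
with $D$ the dropout mask.

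Next we bound each factor in turn.
\begin{itemize}
\item The mask $D$ contributes at most $c_p=1/(1-p)$ in operator norm, and $c_p=1$ at inference.
\item The projection $U_4$ contributes at most $W_{\max}$.
\item For the Hadamard product, we use $\|A\odot B\|_F\le \|B\|_\infty\|A\|_F$, or $\le\|B\|_F\|A\|_F$ if we prefer a cruder norm. This bounds the product by $\|U_2E_A(\tau)\|_\infty\le W_{\max}\|E_A(\tau)\|$. The embedding norm is finite since the table $W_v$ has finitely many rows, so we may take $\max_\tau\|E_A(\tau)\|$.
\item ReLU is $1$-Lipschitz elementwise, since $|\max(a,0)-\max(b,0)|\le|a-b|$. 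Hence $\|\mathrm{ReLU}(U_1X_1)-\mathrm{ReLU}(U_1X_2)\|\le\|U_1(X_1-X_2)\|\le W_{\max}\|X_1-X_2\|$.
\end{itemize}

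Chaining these bounds gives
\[
K=c_p\,W_{\max}^3\,\max_\tau\|E_A(\tau)\|.
\]
This depends only on $W_{\max}$, the embedding norm and, trivially, on $p$, as claimed. If the bound is wanted for the full map $X_{\mathrm{norm}}\mapsto X_{\mathrm{norm}}+X_{\mathrm{mod}}$ fed into Eq.~(12), the constant becomes $1+K$.

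The main obstacle is bookkeeping of norms rather than any real difficulty. $X$ is a $C\times T$ matrix acted on from the right by projections, while $E_A$ is channel-wise, so the Hadamard step must be stated in a compatible norm. I would fix the Frobenius norm throughout, interpret $\|U\|$ as the spectral norm (which is bounded by the Frobenius norm), and use the entrywise max-norm bound for the amplitude factor.

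Finally, the simplified operator $\psi(X,\tau)=X\odot E_A(\tau)+E_P(\tau)$ from the proof sketch is a special case of this argument: there $K=\|E_A(\tau)\|_\infty$ directly.
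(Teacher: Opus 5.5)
The paper gives no proof of this lemma. It only states it and adds a one-sentence ``Discussion'', so there is no alternative route to compare against; your argument supplies the missing proof, and it is correct. The steps all hold:
\begin{itemize}
\item the $X$-independent phase term $U_3E_P(\tau)$ cancels in the difference;
\item ReLU is $1$-Lipschitz entrywise;
\item the amplitude factor is controlled by $\|A\odot B\|_F\le \max_{ij}|B_{ij}|\,\|A\|_F$ together with $\max_{ij}|B_{ij}|\le\|B\|_2$;
\item spectral-norm bounds on the right-acting projections give $\|YU\|_F\le\|Y\|_F\|U\|_2$.
\end{itemize}
Chaining these yields $K=c_pW_{\max}^3\max_\tau\|E_A(\tau)\|$ with $c_p=1/(1-p)$, and $c_p=1$ at inference.

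Compared with the paper's bare assertion, your version adds three things.
\begin{itemize}
\item It gives an explicit constant.
\item It shows that $K$ depends only on the amplitude embedding. $E_P$ and $U_3$ play no role, so the paper's phrase ``the norm of the embeddings'' can be sharpened.
\item It makes precise two points the paper leaves implicit. First, during training the statement only makes sense if both inputs share one dropout-mask realization. Second, $\psi$ can be read either as the full operator or as the simplified map $X\odot E_A(\tau)+E_P(\tau)$ from the proof sketch of Theorem A.2, and you cover both.
\end{itemize}

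One scoping remark: your $X$ is the normalized input $X_{\mathrm{norm}}$, which matches the lemma's ``modulation operator''. If Lipschitzness were wanted with respect to the raw series, you would need an extra factor for instance normalization. That map is globally Lipschitz, with constant $\epsilon^{-1/2}$ per channel, only because of the $\epsilon$ in Algorithm~1; the $\epsilon$-free form in Eq.~(2) is not.
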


\textbf{Discussion}: This continuity ensures that the learnable cyclical embeddings do not introduce high-frequency noise or instability into the latent space. By combining the Discrepancy Minimization (Theorem A.2.) and Lipschitz Stability (Lemma A.3.), PAMod provides a robust performance guarantee for long-term time series forecasting even under severe periodic distribution drifts.

\section{More Details of PAMod}
\label{app:b}

\subsection{Framework Implementation}
The pseudocode of PAMod is presented in Algorithm \ref{alg:1}, where the key design lies in our phase-amplitude modulation. 

\vspace{-0.2cm}
\begin{algorithm}[htbp]
\caption{Pseudocode of PAMod.}
\label{alg:1}
\begin{algorithmic}[1]
\REQUIRE 
    Lookback length $\mathbf{X}\in\mathbb{R}^{T\times C}$, cycle length $L$
\ENSURE
    The prediction horizon $\hat{\mathbf{X}}\in\mathbb{R}^{H\times C}$\\
\STATE Initialize learnable embedding matrix $\Omega_p\in\mathbb{R}^{c\times T}$ and $\Omega_a\in\mathbb{R}^{L\times CT}$ with the Xavier normal distribution
\IF{Instance Normalization is True}
\STATE $\mu, \sigma \leftarrow Mean(X), STD(X)$
\STATE $X_{\text{norm}}\leftarrow \frac{X-\mu}{\sqrt{\sigma^2 + \epsilon}}$
\ENDIF
\STATE \CommentLeft{Cycle Index}
\STATE $T_{end}\leftarrow{\text{time stamp of the last observed step}}$
\STATE $\tau=T_{end}\mod L$
\STATE \CommentLeft{Phase Embedding}
\FOR{$i \in \{1, \cdots, C\}$}
     \STATE $E_P^i(t)\leftarrow{\text{Lookup($W_m, t$)}}$
     \COMMENT{$E_P^i(t)\in\mathbb{R}^{1\times T}$}
\ENDFOR
\STATE $E_P(t)\leftarrow \text{contact}(E_p^1, \cdots, E_p^C)$
\COMMENT{$E_P(t)\in\mathbb{R}^{C\times T}$}
\STATE \CommentLeft{Amplitude Embedding}
\STATE $E_A(t)\leftarrow \text{Reshape(Lookup}(W_v, t)$
\COMMENT{$E_A(t)\in \mathbb{R}^{C\times T}$}
\STATE \CommentLeft{Phase-Amplitude Modulation}
\STATE $X_1=\text{ReLU}(U_1X_{\text{norm}})$
\COMMENT{$X_1\in\mathbb{R}^{C\times T}$}
\STATE $X_{\text{mod}}=\text{Dropout}(U_4(X_1\odot U_2E_A(t)+U_3E_P(t)))$
\COMMENT{$X_{\text{mod}}\in\mathbb{R}^{C\times T}$}
\STATE \CommentLeft{Forecasting Head}
\STATE $\hat{Y}=\text{MLP}(X_{\text{mod}}+X_{\text{norm}})$
\STATE $\hat{\mathbf{Y}} = \hat{\mathbf{Y}}.transpose(-1, -2)$
\COMMENT{$\hat{\mathbf{Y}}\in \mathbb{R}^{H\times C}$}
\IF{Instance Normalization is True}
\STATE $\hat{Y}\leftarrow \hat{Y}\times \sqrt{\sigma^2 + \epsilon} + \mu$
\ENDIF
\end{algorithmic}
\end{algorithm} 
\vspace{-0.2cm}

\subsection{Benchmarks details}
We evaluate the performance of PAMod compared with various baselines on $12$ well-established benchmarks~\footnote{All the datasets are publicly available at \url{https://github.com/thuml/iTransformer}}, which are detailed in Table~\ref{tab:5}.
\vspace{-0.2cm}
\begin{itemize}
    \item \textbf{ETT (Electricity Transformer Temperature)} comprises two hourly-level datasets (i.e., ETTh1 and ETTh2) and two 15-minute-level datasets (i.e., ETTm1 and ETTm2). Each dataset contains seven oil and load features of electricity transformers from July 2016 to July 2018.
    \item \textbf{Electricity} encompasses the hourly electricity consumption data of 321 customers from 2012 to 2014.
    \item \textbf{Traffic} describes the road occupancy rates from the California Department of Transportation. It contains the hourly data recorded by the sensors of San Francisco freeways from 2015 to 2016.
    \item \textbf{Solar} records the solar power production of 137 PV plants in 2006, which is sampled every 10 minutes.
    \item \textbf{Weather} includes 21 indicators of weather, such as air temperature, and humidity. Its data is recorded every 10 min for 2020 in Germany.
    \item \textbf{PEMS} contains public traffic network data in California collected by 5-minute windows.
\end{itemize}
We follow the same data processing and train-validation-test set split protocol of TimesNet~\citep{DBLP:conf/iclr/WuHLZ0L23}, where each part is strictly divided according to chronological order without data leakage issues.

\begin{table}[htbp]
\centering
\caption{Detailed descriptions of datasets. Channel denotes the number of variates in each dataset. Prediction length points out four prediction settings. The dataset size is split into (Train, Validation, Test). Frequency denotes the sampling interval of time points.}
\label{tab:5}
\scriptsize
\begin{tabular}{c|c|c|c|c|c}
\specialrule{0.15em}{0pt}{3pt}
Benchmarks   & Channels & Prediction Length  & Dataset Size  & Frequency & Cycle    \\ 
\specialrule{0.15em}{2pt}{0pt}
 ETTm1   & 7 & \multirow{9}{*}{$\{$96, 192, 336, 720$\}$} & (34465, 11521, 11521) & 15min     & 96    \\ 
\cline{1-2} \cline{4-6}
ETTm2        & 7        &             & (34465, 11521, 11521) & 15min     & 96    \\ \cline{1-2} \cline{4-6}  
 ETTh1        & 7        &      & (8545, 2881, 2881)    & Hourly    & 24    \\ \cline{1-2} \cline{4-6}  
 ETTh2        & 7        &       & (8545, 2881, 2881)    & Hourly    & 24    \\ \cline{1-2} \cline{4-6} 
 ECL          & 321      &       & (18317, 2633, 5261)   & Hourly    & 168    \\ \cline{1-2} \cline{4-6}  
 Traffic      & 862      &    & (12185, 1757, 3509)   & Hourly    & 168 \\ \cline{1-2} \cline{4-6}  
  Weather      & 21       &     & (36792, 5271, 10540)  & 10min     & 144        \\ \cline{1-2} \cline{4-6}  
Solar-energy & 137      &       & (36601, 5161, 10417)  & 10min     & 144   \\ 
\hline
 PEMS03       & 358      & \multirow{4}{*}{$\{$12, 24, 48, 96$\}$}    & (15617, 5135, 5135)   & 5min      & 288 \\ \cline{1-2} \cline{4-6} 
  PEMS04       & 307      &     & (10172, 3375, 3375)   & 5min      & 288 \\ \cline{1-2} \cline{4-6} 
 PEMS07       & 883      &     & (16911, 5622, 5622)   & 5min      & 288 \\ \cline{1-2} \cline{4-6}  
 PEMS08       & 170      &   & (10690, 3548, 3548)   & 5min      & 288 \\ 
\specialrule{0.15em}{0pt}{0pt}
\end{tabular}
\end{table}

\subsection{Experimental Details}
PAMod is trained for 30 epochs with early stopping based on a patience of 5 on the validation set.
During the training process, the learning rate is set to $[1\times 10^{-3}, 5\times 10^{-3}]$, the embedding dimension is set to $512$, and the dropout rate is set to 0.5 by default.
To improve training stability, we adopt a hybrid MAE loss \cite{DBLP:conf/iclr/0049PS0YY0L0T25} that operates in both the time and frequency domains:
\begin{equation}
    \begin{aligned}
        \mathcal{L}_t=\frac{1}{H}\sum_{i=1}^{H}\lvert Y&-\hat{Y}\rvert, ~
        \mathcal{L}_f=\frac{1}{H}\sum_{i=1}^{H}\lvert \mathcal{F}(Y)- \mathcal{F}(\hat{Y})\rvert, \\
        \mathcal{L} &= (1-\alpha)\times \mathcal{L}_t + \alpha\times \mathcal{L}_f,
    \end{aligned}
\end{equation}
where $\mathcal{F}$ denotes the Fast Fourier Transform, and the hyperparameter $\alpha$ is set to $[0.05, 0.35]$ for different datasets.
We use Mean Squared Error (MSE) and Mean Absolute Error (MAE) as evaluation metrics. 
Given the ground truth values $Y\in \mathbb{R}^{H\times C}$ and the predicted values $\hat{Y}\in \mathbb{R}^{H\times C}$, these metrics are defined as:
\begin{equation}
        \text{MSE}=\frac{1}{H}\sum_{i=1}^{H}( Y-\hat{Y})^2, ~~~\text{MAE}=\frac{1}{H}\sum_{i=1}^{H}\lvert Y- \hat{Y}\rvert.
\end{equation}

\section{More Results of PAMod}
\label{app:c}

\subsection{Error Bars}
We obtain the standard deviation of PAMod performance by training the model with 5 different random seeds over 12 datasets.
As seen in Table~\ref{tab:6}, the error bars of all the results are tiny, indicating our PAMod is robust and reliable.

\begin{table}[htbp]
\centering
\caption{Robustness of PAMod performance obtained from 5 random seeds on 12 benchmarks.}
\label{tab:6}
\scriptsize
\setlength{\tabcolsep}{1.6mm}{
\begin{tabular}{c|cc|cc|cc|cc}
\specialrule{0.15em}{0pt}{2pt}
Dataset & \multicolumn{2}{c|}{ETTm1}          & \multicolumn{2}{c|}{ETTm2}          & \multicolumn{2}{c|}{ETTh1}          & \multicolumn{2}{c}{ETTh2}             \\ 
\cmidrule(lr){1-1} \cmidrule(lr){2-3} \cmidrule(lr){4-5} \cmidrule(lr){6-7} \cmidrule(lr){8-9}
Metrics & MSE              & MAE              & MSE              & MAE              & MSE              & MAE              & MSE              & MAE              \\ 
\specialrule{0.10em}{2pt}{2pt}
96      & $0.0.297\pm 0.001$ & $0.337\pm 0.001$ & $0.162\pm 0.000$ & $0.240\pm 0.001$ & $0.357\pm 0.001$ & $0.382\pm 0.000$ & $0.279\pm 0.000$ & $0.328\pm 0.000$ \\
192     & $0.346\pm 0.001$ & $0.366\pm 0.001$ & $0.226\pm 0.001$ & $0.283\pm 0.000$ & $0.403\pm 0.002$ & $0.413\pm 0.001$ & $0.349\pm 0.001$ & $0.374\pm 0.001$ \\
336     & $0.375\pm 0.001$ & $0.387\pm 0.002$ & $0.283\pm 0.001$ & $0.321\pm 0.001$ & $0.443\pm 0.004$ & $0.434\pm 0.002$ & $0.403\pm 0.002$ & $0.414\pm 0.001$ \\
720     & $0.432\pm 0.002$ & $0.422\pm 0.001$ & $0.382\pm 0.002$ & $0.382\pm 0.001$ & $0.447\pm 0.003$ & $0.452\pm 0.001$ & $0.409\pm 0.002$ & $0.432\pm 0.002$ \\ 
\specialrule{0.10em}{2pt}{2pt}
Dataset & \multicolumn{2}{c|}{ECL}        & \multicolumn{2}{c|}{Traffic}       & \multicolumn{2}{c|}{Weather}   & \multicolumn{2}{c}{Solar}         \\ 
\cmidrule(lr){1-1} \cmidrule(lr){2-3} \cmidrule(lr){4-5} \cmidrule(lr){6-7} \cmidrule(lr){8-9}
Metrics & MSE              & MAE              & MSE              & MAE              & MSE              & MAE              & MSE              & MAE              \\ 
\specialrule{0.10em}{2pt}{2pt}
96      & $0.136\pm 0.001$ & $0.226\pm 0.000$ & $0.409\pm 0.001$ & $0.256\pm 0.001$ & $0.153\pm 0.002$ & $0.191\pm 0.002$ & $0.184\pm 0.000$ & $0.209\pm 0.000$ \\
192     & $0.151\pm 0.002$ & $0.240\pm 0.001$ & $0.424\pm 0.002$ & $0.268\pm 0.001$ & $0.201\pm 0.001$ & $0.237\pm 0.001$ & $0.205\pm 0.001$ & $0.226\pm 0.001$ \\
336     & $0.168\pm 0.003$ & $0.258\pm 0.003$ & $0.436\pm 0.002$ & $0.274\pm 0.001$ & $0.259\pm 0.000$ & $0.281\pm 0.001$ & $0.215\pm 0.002$ & $0.232\pm 0.001$ \\
720     & $0.205\pm 0.001$ & $0.290\pm 0.002$ & $0.465\pm 0.002$ & $0.291\pm 0.003$ & $0.341\pm 0.001$ & $0.336\pm 0.002$ & $0.216\pm 0.002$ & $0.238\pm 0.002$ \\ 
\specialrule{0.10em}{2pt}{2pt}
Dataset & \multicolumn{2}{c|}{PEMS03}         & \multicolumn{2}{c|}{PEMS04}         & \multicolumn{2}{c|}{PEMS07}         & \multicolumn{2}{c}{PEMS08}          \\ 
\cmidrule(lr){1-1} \cmidrule(lr){2-3} \cmidrule(lr){4-5} \cmidrule(lr){6-7} \cmidrule(lr){8-9}
Metrics & MSE              & MAE              & MSE              & MAE              & MSE              & MAE              & MSE              & MAE              \\ 
\specialrule{0.10em}{2pt}{2pt}
12      & $0.059\pm 0.000$ & $0.157\pm 0.000$ & $0.066\pm 0.001$ & $0.162\pm 0.001$ & $0.053\pm 0.000$ & $0.141\pm 0.000$ & $0.074\pm 0.000$ & $0.161\pm 0.001$ \\
24      & $0.072\pm 0.000$ & $0.173\pm 0.000$ & $0.075\pm 0.001$ & $0.174\pm 0.000$ & $0.065\pm 0.000$ & $0.156\pm 0.000$ & $0.094\pm 0.000$ & $0.177\pm 0.000$ \\
48      & $0.098\pm 0.001$ & $0.199\pm 0.001$ & $0.091\pm 0.002$ & $0.192\pm 0.001$ & $0.086\pm 0.001$ & $0.176\pm 0.001$ & $0.126\pm 0.001$ & $0.199\pm 0.001$ \\
96      & $0.133\pm 0.002$ & $0.228\pm 0.001$ & $0.111\pm 0.001$ & $0.212\pm 0.001$ & $0.116\pm 0.001$ & $0.199\pm 0.001$ & $0.181\pm 0.002$ & $0.223\pm 0.001$ \\ 
\specialrule{0.15em}{2pt}{0pt}
\end{tabular}}
\vspace{-0.3cm}
\end{table}

\subsection{Extra Quantitative Results}

\begin{table}[htbp]
\centering
\caption{Performance comparison of integrating our PAMod mechanism with different backbone models on the ETTh1 and ETTm2 datasets. The best results are highlighted in \textbf{bold}, and ``Imp." denotes the performance improvement achieved by incorporating PAMod.}
\vspace{-0.1cm}
\label{tab:7}
\scriptsize
\setlength{\tabcolsep}{1.5mm}{
\begin{tabular}{c|cccccccc|cccccccc}
\specialrule{0.15em}{0pt}{1pt}
Dataset      & \multicolumn{8}{c|}{ETTh1}                                                                                                                                                                           & \multicolumn{8}{c}{ETTm2}                                                                                                                                                                            \\ 
\specialrule{0.10em}{0.5pt}{0.5pt}
Horizon      & \multicolumn{2}{c|}{96}                              & \multicolumn{2}{c|}{192}                             & \multicolumn{2}{c|}{336}                             & \multicolumn{2}{c|}{720}        & \multicolumn{2}{c|}{96}                              & \multicolumn{2}{c|}{192}                             & \multicolumn{2}{c|}{336}                             & \multicolumn{2}{c}{720}         \\ 
\specialrule{0.10em}{0.5pt}{0.5pt}
Metric       & MSE            & \multicolumn{1}{c|}{MAE}            & MSE            & \multicolumn{1}{c|}{MAE}            & MSE            & \multicolumn{1}{c|}{MAE}            & MSE            & MAE            & MSE            & \multicolumn{1}{c|}{MAE}            & MSE            & \multicolumn{1}{c|}{MAE}            & MSE            & \multicolumn{1}{c|}{MAE}            & MSE            & MAE            \\ 
\specialrule{0.10em}{0.5pt}{0.5pt}
DLinear \citeyearpar{DBLP:conf/aaai/ZengCZ023}     & 0.386          & \multicolumn{1}{c|}{0.400}          & 0.437          & \multicolumn{1}{c|}{0.432}          & 0.481          & \multicolumn{1}{c|}{0.459}          & 0.519          & \multicolumn{1}{c|}{0.516}          & 0.193          & \multicolumn{1}{c|}{0.292}          & 0.284          & \multicolumn{1}{c|}{0.362}          & 0.369          & \multicolumn{1}{c|}{0.427}          & 0.554          & 0.522          \\
+PAMod       & \textbf{0.375} & \multicolumn{1}{c|}{\textbf{0.389}} & \textbf{0.423} & \multicolumn{1}{c|}{\textbf{0.425}} & \textbf{0.462} & \multicolumn{1}{c|}{\textbf{0.442}} & \textbf{0.476} & \multicolumn{1}{c|}{\textbf{0.458}} & \textbf{0.177} & \multicolumn{1}{c|}{\textbf{0.255}} & \textbf{0.239} & \multicolumn{1}{c|}{\textbf{0.308}} & \textbf{0.308} & \multicolumn{1}{c|}{\textbf{0.346}} & \textbf{0.422} & \textbf{0.413} \\ 
\specialrule{0.10em}{0.5pt}{0.5pt}
Imp.         & 2.8\%          & \multicolumn{1}{c|}{2.8\%}          & 3.2\%          & \multicolumn{1}{c|}{1.6\%}          & 4.0\%          & \multicolumn{1}{c|}{3.7\%}          & 8.3\%          & \multicolumn{1}{c|}{11.2\%}         & 8.3\%          & \multicolumn{1}{c|}{12.7\%}         & 15.8\%         & \multicolumn{1}{c|}{14.9\%}         & 16.5\%         & \multicolumn{1}{c|}{19.0\%}         & 23.8\%         & 20.9\%           \\ 
\specialrule{0.10em}{0.5pt}{0.5pt}
PatchTST \citeyearpar{DBLP:conf/iclr/NieNSK23}    & 0.414          & \multicolumn{1}{c|}{0.419}          & 0.460          & \multicolumn{1}{c|}{0.445}          & 0.501          & \multicolumn{1}{c|}{0.466}          & 0.500          & \multicolumn{1}{c|}{0.488}          & 0.175          & \multicolumn{1}{c|}{0.259}          & 0.241          & \multicolumn{1}{c|}{0.302}          & 0.305          & \multicolumn{1}{c|}{0.343}          & 0.402          & 0.400          \\
+PAMod       & \textbf{0.396} & \multicolumn{1}{c|}{\textbf{0.392}} & \textbf{0.438} & \multicolumn{1}{c|}{\textbf{0.427}} & \textbf{0.469} & \multicolumn{1}{c|}{\textbf{0.445}} & \textbf{0.473} & \multicolumn{1}{c|}{\textbf{0.461}} & \textbf{0.170} & \multicolumn{1}{c|}{\textbf{0.252}} & \textbf{0.235} & \multicolumn{1}{c|}{\textbf{0.296}} & \textbf{0.299} & \multicolumn{1}{c|}{\textbf{0.337}} & \textbf{0.389} & \textbf{0.388} \\ 
\specialrule{0.10em}{0.5pt}{0.5pt}
Imp.         & 4.3\%          & \multicolumn{1}{c|}{6.4\%}          & 4.8\%          & \multicolumn{1}{c|}{4.0\%}          & 6.4\%          & \multicolumn{1}{c|}{4.5\%}          & 5.4\%          & \multicolumn{1}{c|}{5.5\%}          & 2.9\%          & \multicolumn{1}{c|}{2.7\%}          & 2.5\%          & \multicolumn{1}{c|}{2.0\%}          & 2.0\%          & \multicolumn{1}{c|}{1.7\%}          & 3.2\%          & 3.0\%          \\
\specialrule{0.10em}{0.5pt}{0.5pt}
iTransformer \citeyearpar{DBLP:conf/iclr/LiuHZWWML24} & 0.386          & \multicolumn{1}{c|}{0.405}          & 0.441          & \multicolumn{1}{c|}{0.436}          & 0.487          & \multicolumn{1}{c|}{0.458}          & 0.503          & \multicolumn{1}{c|}{0.491}          & 0.180          & \multicolumn{1}{c|}{0.264}          & 0.250          & \multicolumn{1}{c|}{0.309}          & 0.311          & \multicolumn{1}{c|}{0.348}          & 0.412          & 0.407          \\
+PAMod       & \textbf{0.372} & \multicolumn{1}{c|}{\textbf{0.393}} & \textbf{0.425} & \multicolumn{1}{c|}{\textbf{0.421}} & \textbf{0.464} & \multicolumn{1}{c|}{\textbf{0.443}} & \textbf{0.484} & \multicolumn{1}{c|}{\textbf{0.469}} & \textbf{0.169} & \multicolumn{1}{c|}{\textbf{0.252}} & \textbf{0.238} & \multicolumn{1}{c|}{\textbf{0.292}} & \textbf{0.294} & \multicolumn{1}{c|}{\textbf{0.332}} & \textbf{0.385} & \textbf{0.384} \\ 
\specialrule{0.10em}{0.5pt}{0.5pt}
Imp.         & 3.6\%          & \multicolumn{1}{c|}{3.0\%}          & 3.6\%          & \multicolumn{1}{c|}{3.4\%}          & 4.7\%          & \multicolumn{1}{c|}{3.3\%}          & 3.8\%          & \multicolumn{1}{c|}{4.5\%}          & 6.1\%          & \multicolumn{1}{c|}{4.5\%}          & 4.8\%          & \multicolumn{1}{c|}{5.5\%}          & 5.5\%          & \multicolumn{1}{c|}{4.6\%}          & 6.6\%          & 5.7\%          \\ 
\specialrule{0.10em}{0.5pt}{0.5pt}
TQNet \citeyearpar{DBLP:conf/icml/LinCWQL25}       & 0.371          & \multicolumn{1}{c|}{0.393}          & 0.428          & \multicolumn{1}{c|}{0.426}          & 0.476          & \multicolumn{1}{c|}{0.446}          & 0.487          & \multicolumn{1}{c|}{0.470}          & 0.173          & \multicolumn{1}{c|}{0.256}          & 0.238          & \multicolumn{1}{c|}{0.298}          & 0.301          & \multicolumn{1}{c|}{0.340}          & 0.397          & 0.396          \\
+PAMod       & \textbf{0.369} & \multicolumn{1}{c|}{\textbf{0.387}} & \textbf{0.418} & \multicolumn{1}{c|}{\textbf{0.417}} & \textbf{0.457} & \multicolumn{1}{c|}{\textbf{0.436}} & \textbf{0.453} & \multicolumn{1}{c|}{\textbf{0.455}} & \textbf{0.165} & \multicolumn{1}{c|}{\textbf{0.242}} & \textbf{0.226} & \multicolumn{1}{c|}{\textbf{0.284}} & \textbf{0.285} & \multicolumn{1}{c|}{\textbf{0.323}} & \textbf{0.377} & \textbf{0.380} \\ 
\specialrule{0.10em}{0.5pt}{0.5pt}
Imp.          & 0.5\%          & \multicolumn{1}{c|}{1.5\%}          & 2.3\%          & \multicolumn{1}{c|}{2.1\%}          & 4.0\%          & \multicolumn{1}{c|}{2.2\%}          & 7.0\%          & \multicolumn{1}{c|}{3.2\%}          & 4.6\%          & \multicolumn{1}{c|}{5.5\%}          & 5.0\%          & \multicolumn{1}{c|}{4.7\%}          & 5.3\%          & \multicolumn{1}{c|}{5.6\%}          & 5.0\%          & 4.0\%           \\ 
\specialrule{0.15em}{1pt}{0pt}
\end{tabular}}
\vspace{-0.2cm}
\end{table}

Table \ref{tab:7} shows more compatibility analysis on the ETTh1 and ETTm2 datasets.
Table \ref{tab:8} presents the full comparison results of PAMod against nine baseline methods across 12 real-world multivariate datasets. 
The results demonstrate that PAMod consistently achieves state-of-the-art forecasting performance under most experimental settings, underscoring the effectiveness of the proposed approach.

\subsection{Extra Visual Cases}

\begin{figure}[htbp]
  \centering
  \subfloat[Embedding weights on ETTm1]
  {\includegraphics[width=0.50\textwidth]{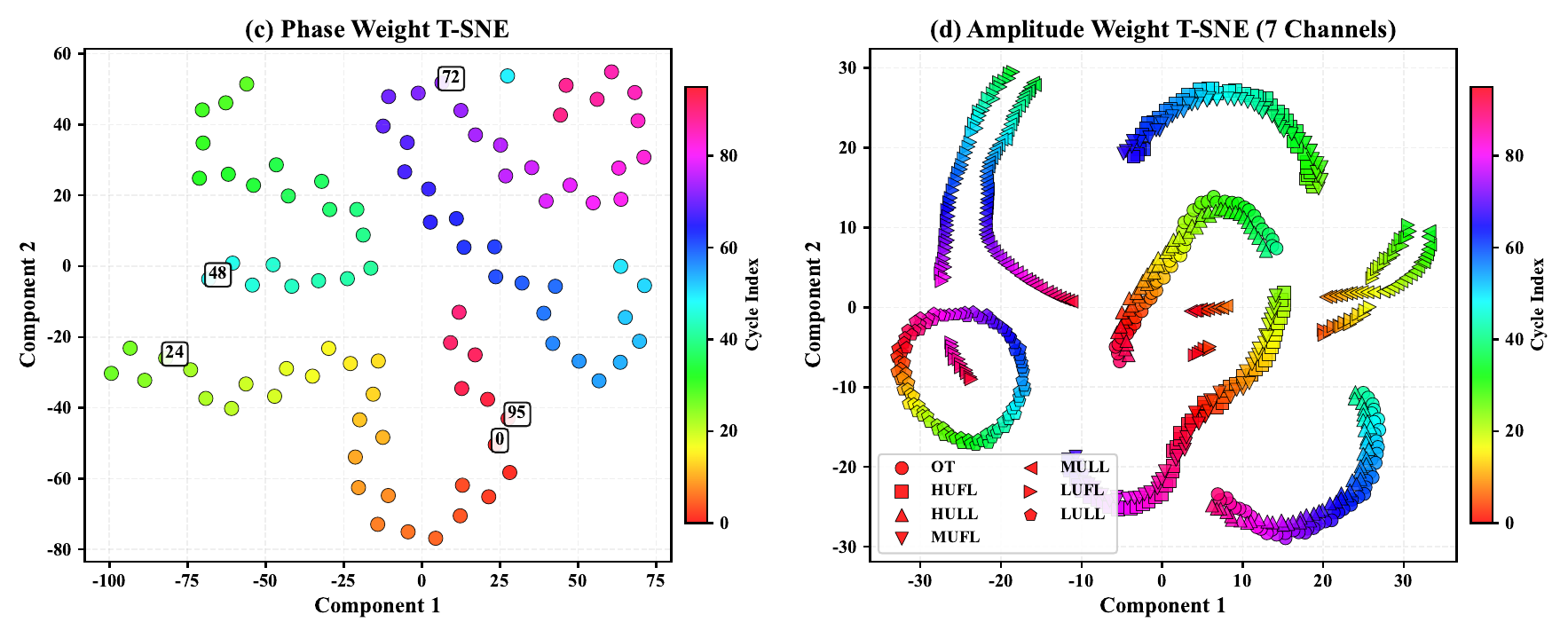}\label{fig:9-1}}
  \hfill
  \subfloat[Distribution alignment on ECL]
  {\includegraphics[width=0.45\textwidth]{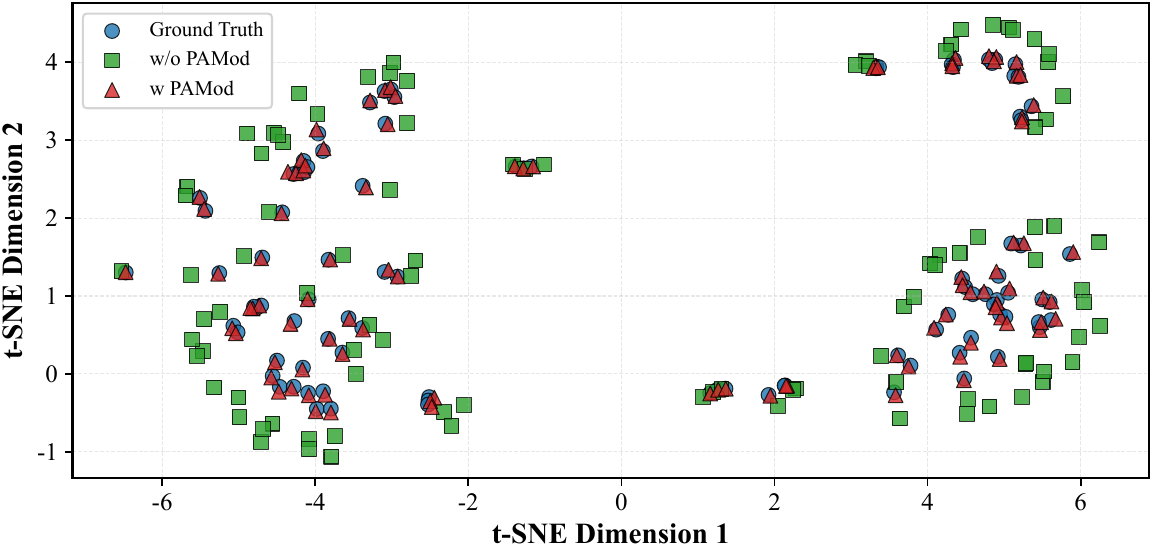}\label{fig:9-2}}\\
  \vspace{0.3cm}
  \subfloat[Forecasting performance on ECL]
  {\includegraphics[width=0.48\textwidth]{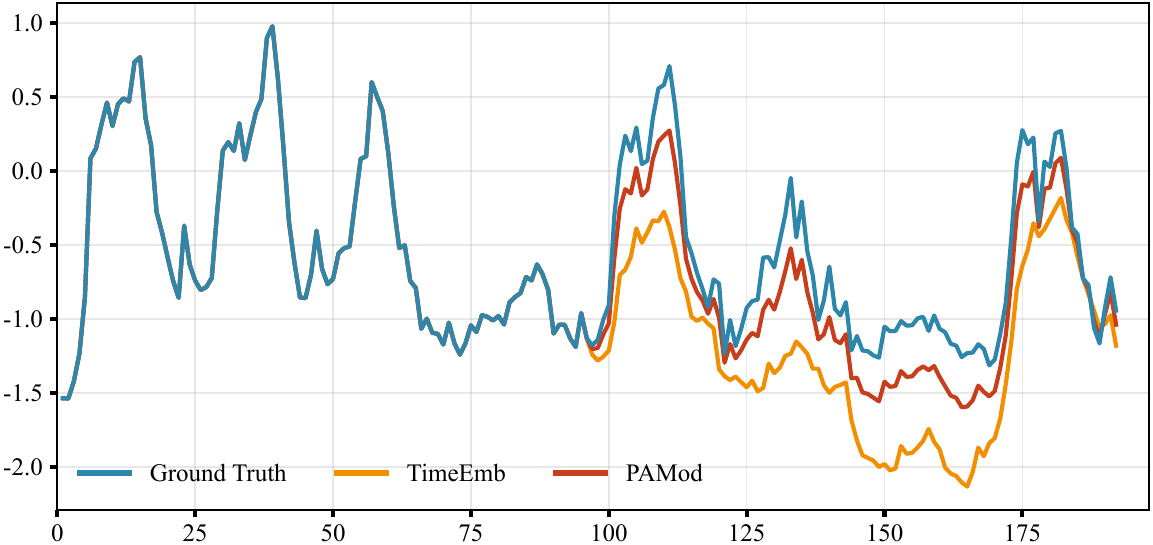}\label{fig:9-3}}
  \hfill
  \subfloat[Lookback sensitivity on Traffic]
  {\includegraphics[width=0.48\textwidth]{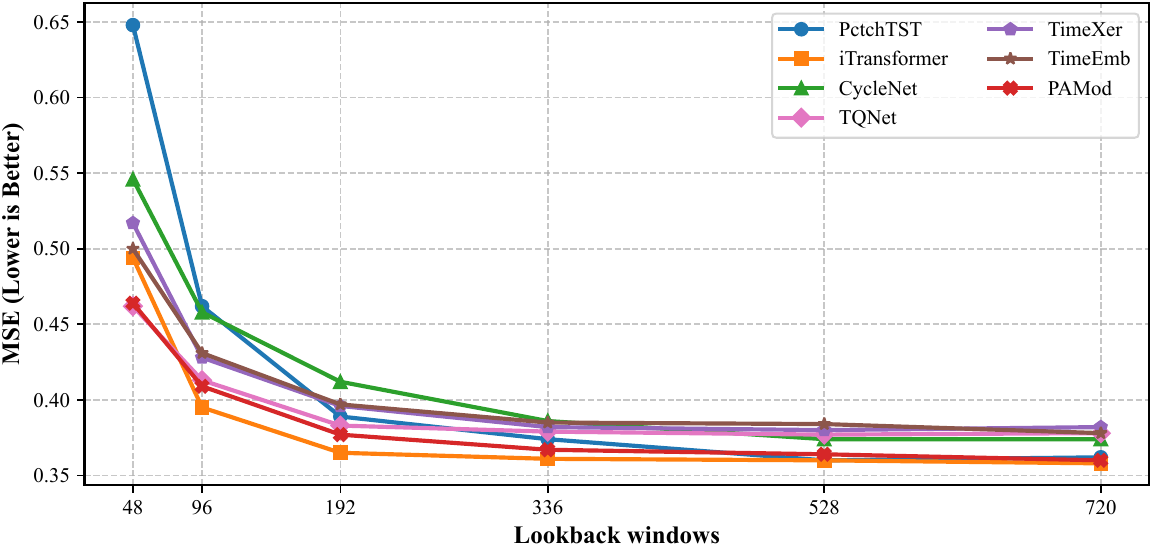}\label{fig:9-4}}
  \caption{Extra visual cases for PAMod}
  \label{fig:9}
\end{figure}

Here, we provide more showcases for visualization in Figure \ref{fig:9}.
In subfigure 9(a), we display the phase and amplitude embedding weights to illustrate their respective roles in modeling mean and variance shifts.
In subfigure 9(b), we show that predictions with PAMod closely overlap the ground truth on ECL, reflecting strong distributional alignment. 
In subfigure 9(c), we exhibit that PAMod achieves a closer fit to the ground truth curve than TimeEmb on ECL, further demonstrating its superior capacity for modeling temporal dynamics. 
In subfigure 9(d), as the lookback window increases, PAMod consistently outperforms most baselines, confirming that modeling cyclical shifts enables effective use of longer historical context.

\begin{table}[htbp]
\centering
\caption{Full results of multivariate time series forecasting across 12 benchmarks. The best results are highlighted in \textbf{bold}, the second best are \underline{underlined}, and the \textbf{Count} row counts the number of times each model ranks in the top 1.}
\label{tab:8}
\scriptsize
\setlength{\tabcolsep}{1.1mm}
{
\begin{tabular}{cc|cc|cc|cc|cc|cc|cc|cc|cc|cc|cc}
\specialrule{0.15em}{0pt}{2pt}
\multicolumn{2}{c|}{\multirow{2}{*}{Model}}         & \multicolumn{2}{c|}{\textbf{PAMod}}  & \multicolumn{2}{c|}{TQNet}   & \multicolumn{2}{c|}{TimeEmb}    & \multicolumn{2}{c|}{FilterTS} & \multicolumn{2}{c|}{Amplifier} & \multicolumn{2}{c|}{TimeXer}    & \multicolumn{2}{c|}{CycleNet} & \multicolumn{2}{c|}{TimeMixer}  & \multicolumn{2}{c|}{iTransformer} & \multicolumn{2}{c}{PatchTST} \\ 
\cmidrule(lr){3-4} \cmidrule(lr){5-6} \cmidrule(lr){7-8} \cmidrule(lr){9-10} \cmidrule(lr){11-12} \cmidrule(lr){13-14} \cmidrule(lr){15-16} \cmidrule(lr){17-18} \cmidrule(lr){19-20} \cmidrule(lr){21-22}
\multicolumn{2}{c|}{}                               & \multicolumn{2}{c|}{\textbf{(Ours)}} & \multicolumn{2}{c|}{\citeyearpar{DBLP:conf/icml/LinCWQL25}}    & \multicolumn{2}{c|}{\citeyearpar{DBLP:journals/corr/abs-2510-00461}}       & \multicolumn{2}{c|}{\citeyearpar{DBLP:conf/aaai/WangLDW25}}     & \multicolumn{2}{c|}{\citeyearpar{DBLP:conf/aaai/Fei000N25}}      & \multicolumn{2}{c|}{\citeyearpar{DBLP:conf/nips/WangWDQZLQWL24}}       & \multicolumn{2}{c|}{\citeyearpar{DBLP:conf/nips/Lin0HWMZ24}}     & \multicolumn{2}{c|}{\citeyearpar{DBLP:conf/iclr/WangWSHLMZ024}}       & \multicolumn{2}{c|}{\citeyearpar{DBLP:conf/iclr/LiuHZWWML24}}         & \multicolumn{2}{c}{\citeyearpar{DBLP:conf/iclr/NieNSK23}}     \\ 
\specialrule{0.10em}{1pt}{1pt}
\multicolumn{2}{c|}{Metric}                         & MSE               & MAE              & MSE            & MAE         & MSE            & MAE            & MSE        & MAE              & MSE          & MAE             & MSE            & MAE            & MSE           & MAE           & MSE            & MAE            & MSE               & MAE           & MSE           & MAE          \\ 
\specialrule{0.10em}{1pt}{1pt}
\multicolumn{1}{c|}{\multirow{5}{*}{ETTm1}}   & 96  & \textbf{0.297}    & \textbf{0.337}   & 0.311          & 0.353       & {\underline{0.305}}    & {\underline{0.344}}    & 0.321      & 0.360            & 0.316        & 0.355           & 0.318          & 0.356          & 0.319         & 0.360         & 0.320          & 0.357          & 0.334             & 0.368         & 0.329         & 0.367        \\
\multicolumn{1}{c|}{}                         & 192 & \textbf{0.346}    & \textbf{0.366}   & 0.356          & 0.378       & {\underline{0.354}}    & {\underline{0.374}}    & 0.363      & 0.382            & 0.361        & 0.381           & 0.362          & 0.383          & 0.360         & 0.381         & 0.361          & 0.381          & 0.377             & 0.391         & 0.367         & 0.385        \\
\multicolumn{1}{c|}{}                         & 336 & \textbf{0.375}    & \textbf{0.387}   & 0.390          & 0.401       & {\underline{0.380}}    & {\underline{0.393}}    & 0.396      & 0.404            & 0.393        & 0.404           & 0.395          & 0.407          & 0.389         & 0.403         & 0.390          & 0.404          & 0.426             & 0.420         & 0.399         & 0.410        \\
\multicolumn{1}{c|}{}                         & 720 & \textbf{0.432}    & \textbf{0.422}   & 0.452          & 0.440       & {\underline{0.434}}    & {\underline{0.427}}    & 0.462      & 0.438            & 0.456        & 0.440           & 0.452          & 0.441          & 0.447         & 0.441         & 0.454          & 0.441          & 0.491             & 0.459         & 0.454         & 0.439        \\ 
\cmidrule(lr){2-22}
\multicolumn{1}{c|}{}                         & Avg & \textbf{0.363}    & \textbf{0.378}   & 0.377          & 0.393       & {\underline{0.368}}    & {\underline{0.385}}    & 0.385      & 0.396            & 0.382        & 0.395           & 0.382          & 0.397          & 0.379         & 0.396         & 0.381          & 0.395          & 0.407             & 0.410         & 0.387         & 0.400        \\ 
\specialrule{0.10em}{1pt}{1pt}
\multicolumn{1}{c|}{\multirow{5}{*}{ETTm2}}   & 96  & \textbf{0.162}    & \textbf{0.240}   & 0.173          & 0.256       & 0.164          & {\underline{0.243}}    & 0.173      & 0.256            & 0.178        & 0.261           & 0.171          & 0.256          & {\underline{0.163}}   & 0.246         & 0.175          & 0.258          & 0.180             & 0.264         & 0.175         & 0.259        \\
\multicolumn{1}{c|}{}                         & 192 & \textbf{0.226}    & \textbf{0.283}   & 0.238          & 0.298       & {\underline{0.227}}    & {\underline{0.285}}    & 0.238      & 0.299            & 0.244        & 0.304           & 0.237          & 0.299          & 0.229         & 0.290         & 0.237          & 0.299          & 0.250             & 0.309         & 0.241         & 0.302        \\
\multicolumn{1}{c|}{}                         & 336 & \textbf{0.283}    & \textbf{0.321}   & 0.301          & 0.340       & 0.285          & {\underline{0.323}}    & 0.300      & 0.338            & 0.309        & 0.346           & 0.296          & 0.338          & {\underline{0.284}}   & 0.327         & 0.298          & 0.340          & 0.311             & 0.348         & 0.305         & 0.343        \\
\multicolumn{1}{c|}{}                         & 720 & \textbf{0.382}    & \textbf{0.382}   & 0.397          & 0.396       & {\underline{0.383}}    & \textbf{0.382} & 0.399      & 0.395            & 0.390        & 0.394           & 0.392          & 0.394          & 0.389         & {\underline{0.391}}   & 0.391          & 0.396          & 0.412             & 0.407         & 0.402         & 0.400        \\ 
\cmidrule(lr){2-22}
\multicolumn{1}{c|}{}                         & Avg & \textbf{0.263}    & \textbf{0.307}   & 0.277          & 0.323       & {\underline{0.265}}    & {\underline{0.308}}    & 0.277      & 0.322            & 0.280        & 0.326           & 0.274          & 0.322          & 0.266         & 0.314         & 0.275          & 0.323          & 0.288             & 0.332         & 0.281         & 0.326        \\ 
\specialrule{0.10em}{1pt}{1pt}
\multicolumn{1}{c|}{\multirow{5}{*}{ETTh1}}   & 96  & \textbf{0.357}    & \textbf{0.382}   & 0.371          & 0.393       & {\underline{0.366}}    & {\underline{0.387}}    & 0.375      & 0.391            & 0.371        & 0.392           & 0.382          & 0.403          & 0.375         & 0.395         & 0.375          & 0.400          & 0.386             & 0.405         & 0.414         & 0.419        \\
\multicolumn{1}{c|}{}                         & 192 & \textbf{0.403}    & \textbf{0.413}   & 0.428          & 0.426       & {\underline{0.417}}    & {\underline{0.416}}    & 0.424      & 0.421            & 0.425        & 0.422           & 0.429          & 0.435          & 0.436         & 0.428         & 0.429          & 0.421          & 0.441             & 0.436         & 0.460         & 0.445        \\
\multicolumn{1}{c|}{}                         & 336 & \textbf{0.443}    & \textbf{0.434}   & 0.476          & 0.446       & 0.457          & {\underline{0.436}}    & 0.465      & 0.442            & {\underline{0.448}}  & \textbf{0.434}  & 0.468          & 0.448          & 0.496         & 0.455         & 0.484          & 0.458          & 0.487             & 0.458         & 0.501         & 0.466        \\
\multicolumn{1}{c|}{}                         & 720 & \textbf{0.447}    & \textbf{0.452}   & 0.487          & 0.470       & {\underline{0.459}}    & {\underline{0.460}}    & 0.472      & 0.466            & 0.476        & 0.464           & 0.469          & 0.461          & 0.520         & 0.484         & 0.498          & 0.482          & 0.503             & 0.491         & 0.500         & 0.488        \\ 
\cmidrule(lr){2-22}
\multicolumn{1}{c|}{}                         & Avg & \textbf{0.413}    & \textbf{0.420}   & 0.441          & 0.434       & {\underline{0.425}}    & {\underline{0.425}}    & 0.434      & 0.430            & 0.430        & 0.428           & 0.437          & 0.437          & 0.457         & 0.441         & 0.447          & 0.440          & 0.454             & 0.448         & 0.469         & 0.455        \\ 
\specialrule{0.10em}{1pt}{1pt}
\multicolumn{1}{c|}{\multirow{5}{*}{ETTh2}}   & 96  & {\underline{0.279}}       & \textbf{0.328}   & 0.295          & 0.343       & \textbf{0.277} & \textbf{0.328} & 0.290      & {\underline{0.338}}      & 0.290        & 0.341           & 0.286          & {\underline{0.338}}    & 0.298         & 0.344         & 0.289          & 0.341          & 0.297             & 0.349         & 0.302         & 0.348        \\
\multicolumn{1}{c|}{}                         & 192 & \textbf{0.349}    & \textbf{0.374}   & 0.367          & 0.393       & {\underline{0.354}}    & {\underline{0.378}}    & 0.374      & 0.390            & 0.369        & 0.390           & 0.363          & 0.389          & 0.372         & 0.396         & 0.372          & 0.392          & 0.380             & 0.400         & 0.388         & 0.400        \\
\multicolumn{1}{c|}{}                         & 336 & 0.403             & \textbf{0.414}   & 0.417          & 0.427       & {\underline{0.400}}    & {\underline{0.417}}    & 0.415      & 0.424            & 0.419        & 0.431           & 0.414          & 0.423          & 0.431         & 0.439         & \textbf{0.386} & \textbf{0.414} & 0.428             & 0.432         & 0.426         & 0.433        \\
\multicolumn{1}{c|}{}                         & 720 & {\underline{0.409}}       & \textbf{0.432}   & 0.433          & 0.446       & 0.416          & 0.437          & 0.420      & 0.438            & 0.446        & 0.456           & \textbf{0.408} & \textbf{0.432} & 0.450         & 0.458         & 0.412          & {\underline{0.434}}    & 0.427             & 0.445         & 0.431         & 0.446        \\ 
\cmidrule(lr){2-22}
\multicolumn{1}{c|}{}                         & Avg & \textbf{0.360}    & \textbf{0.387}   & 0.378          & 0.402       & {\underline{0.362}}    & {\underline{0.390}}    & 0.375      & 0.398            & 0.381        & 0.405           & 0.368          & 0.396          & 0.388         & 0.409         & 0.364          & 0.395          & 0.383             & 0.407         & 0.387         & 0.407        \\ 
\specialrule{0.10em}{1pt}{1pt}
\multicolumn{1}{c|}{\multirow{5}{*}{ECL}}     & 96  & {\underline{0.136}}       & \textbf{0.226}   & \textbf{0.134} & {\underline{0.229}} & {\underline{0.137}}    & {\underline{0.232}}    & 0.151      & 0.245            & 0.147        & 0.242           & 0.140          & 0.242          & {\underline{0.136}}   & {\underline{0.229}}   & 0.153          & 0.247          & 0.148             & 0.240         & 0.181         & 0.270        \\
\multicolumn{1}{c|}{}                         & 192 & \textbf{0.151}    & \textbf{0.240}   & 0.154          & 0.247       & 0.154          & 0.248          & 0.164      & 0.256            & 0.158        & 0.251           & 0.157          & 0.256          & {\underline{0.152}}   & {\underline{0.244}}   & 0.166          & 0.256          & 0.162             & 0.253         & 0.188         & 0.274        \\
\multicolumn{1}{c|}{}                         & 336 & \textbf{0.168}    & \textbf{0.258}   & {\underline{0.169}}    & {\underline{0.264}} & 0.171          & 0.265          & 0.181      & 0.274            & 0.175        & 0.271           & 0.176          & 0.275          & 0.170         & {\underline{0.264}}   & 0.185          & 0.277          & 0.178             & 0.269         & 0.204         & 0.293        \\
\multicolumn{1}{c|}{}                         & 720 & {\underline{0.205}}       & \textbf{0.290}   & \textbf{0.201} & {\underline{0.294}} & {\underline{0.209}}    & {\underline{0.298}}    & 0.225      & 0.311            & 0.206        & 0.298           & 0.211          & 0.306          & 0.212         & 0.299         & 0.225          & 0.310          & 0.225             & 0.317         & 0.246         & 0.324        \\ 
\cmidrule(lr){2-22}
\multicolumn{1}{c|}{}                         & Avg & {\underline{0.165}}       & \textbf{0.254}   & \textbf{0.164} & {\underline{0.259}} & 0.168          & 0.261          & 0.180      & 0.272            & 0.172        & 0.266           & 0.171          & 0.270          & 0.168         & 0.259         & 0.182          & 0.272          & 0.178             & 0.270         & 0.205         & 0.290        \\ 
\specialrule{0.10em}{1pt}{1pt}
\multicolumn{1}{c|}{\multirow{5}{*}{Traffic}} & 96  & {\underline{0.409}}       & \textbf{0.256}   & 0.413          & {\underline{0.261}} & 0.431          & 0.280          & 0.446      & 0.308            & 0.456        & 0.299           & 0.428          & 0.271          & 0.458         & 0.296         & 0.462          & 0.285          & \textbf{0.395}    & 0.268         & 0.462         & 0.290        \\
\multicolumn{1}{c|}{}                         & 192 & {\underline{0.424}}       & \textbf{0.268}   & 0.432          & {\underline{0.271}} & 0.442          & 0.291          & 0.456      & 0.308            & 0.472        & 0.318           & 0.448          & 0.282          & 0.457         & 0.294         & 0.473          & 0.296          & \textbf{0.417}    & 0.276         & 0.466         & 0.290        \\
\multicolumn{1}{c|}{}                         & 336 & {\underline{0.436}}       & \textbf{0.274}   & 0.450          & {\underline{0.277}} & 0.455          & 0.297          & 0.472      & 0.313            & 0.487        & 0.320           & 0.473          & 0.289          & 0.470         & 0.299         & 0.498          & 0.296          & \textbf{0.433}    & 0.283         & 0.482         & 0.300        \\
\multicolumn{1}{c|}{}                         & 720 & \textbf{0.465}    & \textbf{0.291}   & 0.486          & {\underline{0.295}} & 0.484          & 0.313          & 0.508      & 0.332            & 0.517        & 0.332           & 0.516          & 0.307          & 0.502         & 0.314         & 0.506          & 0.313          & {\underline{0.467}}       & 0.302         & 0.514         & 0.320        \\ 
\cmidrule(lr){2-22}
\multicolumn{1}{c|}{}                         & Avg & {\underline{0.434}}       & \textbf{0.272}   & 0.445          & {\underline{0.276}} & 0.453          & 0.295          & 0.470      & 0.315            & 0.483        & 0.317           & 0.466          & 0.287          & 0.472         & 0.314         & 0.484          & 0.297          & \textbf{0.428}    & 0.282         & 0.481         & 0.300        \\ 
\specialrule{0.10em}{1pt}{1pt}
\multicolumn{1}{c|}{\multirow{5}{*}{Weather}} & 96  & {\underline{0.153}}       & {\underline{0.191}}      & 0.157          & 0.200       & \textbf{0.150} & \textbf{0.190} & 0.162      & 0.207            & 0.167        & 0.212           & 0.157          & 0.205          & 0.158         & 0.203         & 0.163          & 0.209          & 0.174             & 0.214         & 0.177         & 0.210        \\
\multicolumn{1}{c|}{}                         & 192 & \textbf{0.201}    & \textbf{0.237}   & 0.206          & 0.245       & \textbf{0.201} & {\underline{0.238}}    & 0.209      & 0.252            & 0.215        & 0.251           & {\underline{0.204}}          & 0.247          & 0.207         & 0.247         & 0.208          & 0.250          & 0.221             & 0.254         & 0.225         & 0.250        \\
\multicolumn{1}{c|}{}                         & 336 & \textbf{0.259}    & \textbf{0.281}   & 0.262          & 0.287       & \textbf{0.259} & {\underline{0.282}}    & 0.263      & 0.292            & 0.276        & 0.292           & 0.261          & 0.290          & 0.262         & 0.289         & 0.251          & 0.287          & 0.278             & 0.296         & 0.278         & 0.290        \\
\multicolumn{1}{c|}{}                         & 720 & {0.341}       & \textbf{0.336}   & 0.344          & 0.342       & \textbf{0.339} & \textbf{0.336} & 0.345      & 0.344            & 0.352        & 0.346           & {\underline{0.340}}          & 0.341          & 0.344         & 0.344         & 0.339          & 0.341          & 0.358             & 0.349         & 0.354         & 0.340        \\ 
\cmidrule(lr){2-22}
\multicolumn{1}{c|}{}                         & Avg & {\underline{0.239}}       & \textbf{0.261}   & 0.242          & 0.269       & \textbf{0.237} & {\underline{0.262}}    & 0.245      & 0.274            & 0.253        & 0.275           & 0.241          & 0.271          & 0.243         & 0.271         & 0.240          & 0.271          & 0.258             & 0.278         & 0.259         & 0.273        \\ 
\specialrule{0.10em}{1pt}{1pt}
\multicolumn{1}{c|}{\multirow{5}{*}{Solar}}   & 96  & {\underline{0.184}}       & \textbf{0.209}   & \textbf{0.173} & {\underline{0.233}} & 0.206          & 0.241          & 0.196      & 0.264            & 0.234        & 0.283           & 0.215          & 0.295          & 0.190         & 0.247         & 0.189          & 0.259          & 0.203             & 0.237         & 0.234         & 0.286        \\
\multicolumn{1}{c|}{}                         & 192 & {\underline{0.205}}       & \textbf{0.226}   & \textbf{0.199} & {\underline{0.257}} & 0.241          & 0.262          & 0.211      & 0.278            & 0.237        & 0.259           & 0.236          & 0.301          & 0.210         & 0.266         & 0.222          & 0.283          & 0.233             & 0.261         & 0.267         & 0.310        \\
\multicolumn{1}{c|}{}                         & 336 & {\underline{0.215}}       & \textbf{0.232}   & \textbf{0.211} & {\underline{0.263}} & 0.268          & 0.284          & 0.226      & 0.284            & 0.247        & 0.269           & 0.252          & 0.307          & 0.217         & 0.266         & 0.231          & 0.292          & 0.248             & 0.273         & 0.290         & 0.315        \\
\multicolumn{1}{c|}{}                         & 720 & {\underline{0.216}}       & \textbf{0.238}   & \textbf{0.209} & 0.270       & 0.278          & 0.291          & 0.227      & 0.282            & 0.246        & 0.270           & 0.244          & 0.305          & 0.223         & {\underline{0.266}}   & 0.223          & 0.285          & 0.249             & 0.275         & 0.289         & 0.317        \\ 
\cmidrule(lr){2-22}
\multicolumn{1}{c|}{}                         & Avg & {\underline{0.206}}       & \textbf{0.226}   & \textbf{0.198} & {\underline{0.256}} & 0.248          & 0.270          & 0.215      & 0.277            & 0.241        & 0.270           & 0.237          & 0.302          & 0.210         & 0.261         & 0.216          & 0.280          & 0.233             & 0.262         & 0.270         & 0.307        \\ 
\specialrule{0.10em}{1pt}{1pt}
\multicolumn{1}{c|}{\multirow{5}{*}{PEMS03}}  & 12  & \textbf{0.059}    & \textbf{0.157}   & {\underline{0.060}}    & {\underline{0.161}} & 0.066          & 0.168          & 0.072      & 0.181            & 0.070        & 0.172           & 0.070          & 0.173          & 0.066         & 0.172         & 0.076          & 0.188          & 0.071             & 0.174         & 0.099         & 0.216        \\
\multicolumn{1}{c|}{}                         & 24  & \textbf{0.072}    & \textbf{0.173}   & {\underline{0.077}}    & {\underline{0.182}} & 0.083          & 0.189          & 0.104      & 0.219            & 0.090        & 0.200           & 0.092          & 0.194          & 0.089         & 0.201         & 0.113          & 0.226          & 0.093             & 0.201         & 0.142         & 0.259        \\
\multicolumn{1}{c|}{}                         & 48  & \textbf{0.098}    & \textbf{0.199}   & {\underline{0.104}}    & {\underline{0.215}} & 0.116          & 0.220          & 0.155      & 0.269            & 0.147        & 0.260           & 0.129          & 0.229          & 0.136         & 0.247         & 0.191          & 0.292          & 0.125             & 0.236         & 0.211         & 0.319        \\
\multicolumn{1}{c|}{}                         & 96  & \textbf{0.133}    & \textbf{0.228}   & {\underline{0.148}}    & {\underline{0.253}} & 0.151          & 0.251          & 0.203      & 0.315            & 0.217        & 0.323           & 0.157          & 0.261          & 0.182         & 0.282         & 0.288          & 0.363          & 0.164             & 0.275         & 0.269         & 0.370        \\ 
\cmidrule(lr){2-22}
\multicolumn{1}{c|}{}                         & Avg & \textbf{0.091}    & \textbf{0.189}   & {\underline{0.097}}    & {\underline{0.203}} & 0.104          & 0.207          & 0.134      & 0.246            & 0.131        & 0.239           & 0.112          & 0.214          & 0.118         & 0.226         & 0.167          & 0.267          & 0.113             & 0.222         & 0.180         & 0.291        \\ 
\specialrule{0.10em}{1pt}{1pt}
\multicolumn{1}{c|}{\multirow{5}{*}{PEMS04}}  & 12  & \textbf{0.066}    & \textbf{0.162}   & {\underline{0.067}}    & {\underline{0.166}} & 0.071          & 0.172          & 0.087      & 0.199            & 0.082        & 0.190           & 0.074          & 0.178          & 0.078         & 0.186         & 0.092          & 0.204          & 0.078             & 0.183         & 0.105         & 0.224        \\
\multicolumn{1}{c|}{}                         & 24  & \textbf{0.075}    & \textbf{0.174}   & {\underline{0.077}}    & {\underline{0.181}} & 0.083          & 0.187          & 0.107      & 0.223            & 0.102        & 0.215           & 0.087          & 0.195          & 0.099         & 0.212         & 0.128          & 0.243          & 0.095             & 0.205         & 0.153         & 0.275        \\
\multicolumn{1}{c|}{}                         & 48  & \textbf{0.091}    & \textbf{0.192}   & {\underline{0.097}}    & {\underline{0.206}} & 0.104          & 0.210          & 0.138      & 0.255            & 0.151        & 0.269           & 0.110          & 0.214          & 0.133         & 0.248         & 0.213          & 0.315          & 0.120             & 0.233         & 0.229         & 0.339        \\
\multicolumn{1}{c|}{}                         & 96  & \textbf{0.111}    & \textbf{0.212}   & {\underline{0.123}}    & {\underline{0.233}} & 0.126          & 0.231          & 0.166      & 0.285            & 0.205        & 0.320           & 0.148          & 0.251          & 0.167         & 0.281         & 0.307          & 0.384          & 0.150             & 0.262         & 0.291         & 0.389        \\ 
\cmidrule(lr){2-22}
\multicolumn{1}{c|}{}                         & Avg & \textbf{0.086}    & \textbf{0.185}   & {\underline{0.091}}    & {\underline{0.197}} & 0.096          & 0.200          & 0.125      & 0.241            & 0.135        & 0.249           & 0.105          & 0.209          & 0.119         & 0.232         & 0.185          & 0.287          & 0.111             & 0.221         & 0.195         & 0.307        \\ 
\specialrule{0.10em}{1pt}{1pt}
\multicolumn{1}{c|}{\multirow{5}{*}{PEMS07}}  & 12  & {\underline{0.053}}       & \textbf{0.141}   & \textbf{0.051} & {\underline{0.143}} & 0.057          & 0.151          & 0.067      & 0.170            & 0.079        & 0.179           & 0.057          & 0.152          & 0.062         & 0.162         & 0.073          & 0.184          & 0.067             & 0.265         & 0.095         & 0.207        \\
\multicolumn{1}{c|}{}                         & 24  & {\underline{0.065}}       & \textbf{0.156}   & \textbf{0.063} & {\underline{0.159}} & 0.075          & 0.171          & 0.095      & 0.203            & 0.094        & 0.196           & 0.079          & 0.179          & 0.086         & 0.192         & 0.111          & 0.219          & 0.088             & 0.190         & 0.150         & 0.262        \\
\multicolumn{1}{c|}{}                         & 48  & {\underline{0.086}}       & \textbf{0.176}   & \textbf{0.081} & {\underline{0.179}} & 0.106          & 0.200          & 0.146      & 0.239            & 0.129        & 0.237           & 0.099          & 0.191          & 0.128         & 0.234         & 0.237          & 0.328          & 0.110             & 0.215         & 0.253         & 0.340        \\
\multicolumn{1}{c|}{}                         & 96  & {\underline{0.116}}       & \textbf{0.199}   & \textbf{0.103} & {\underline{0.203}} & 0.148          & 0.231          & 0.172      & 0.266            & 0.185        & 0.291           & 0.107          & 0.205          & 0.176         & 0.268         & 0.303          & 0.354          & 0.139             & 0.245         & 0.346         & 0.404        \\ 
\cmidrule(lr){2-22}
\multicolumn{1}{c|}{}                         & Avg & {\underline{0.080}}       & \textbf{0.168}   & \textbf{0.075} & {\underline{0.171}} & 0.097          & 0.188          & 0.120      & 0.220            & 0.122        & 0.226           & 0.085          & 0.182          & 0.113         & 0.214         & 0.181          & 0.271          & 0.101             & 0.204         & 0.211         & 0.303        \\ 
\specialrule{0.10em}{1pt}{1pt}
\multicolumn{1}{c|}{\multirow{5}{*}{PEMS08}}  & 12  & {\underline{0.074}}       & \textbf{0.161}   & \textbf{0.071} & {\underline{0.170}} & 0.087          & 0.188          & 0.086      & 0.194            & 0.079        & 0.182           & 0.075          & 0.176          & 0.082         & 0.185         & 0.091          & 0.201          & 0.079             & 0.182         & 0.168         & 0.232        \\
\multicolumn{1}{c|}{}                         & 24  & \textbf{0.094}    & \textbf{0.177}   & {\underline{0.096}}    & {\underline{0.196}} & 0.106          & 0.198          & 0.126      & 0.236            & 0.115        & 0.218           & 0.102          & 0.201          & 0.117         & 0.226         & 0.137          & 0.246          & 0.115             & 0.219         & 0.224         & 0.281        \\
\multicolumn{1}{c|}{}                         & 48  & \textbf{0.126}    & \textbf{0.199}   & {\underline{0.149}}    & 0.244       & 0.158          & 0.253          & 0.223      & 0.320            & 0.192        & 0.294           & 0.158          & 0.248          & 0.169         & 0.268         & 0.265          & 0.343          & 0.186             & {\underline{0.235}}   & 0.321         & 0.354        \\
\multicolumn{1}{c|}{}                         & 96  & {\underline{0.181}}       & \textbf{0.223}   & 0.253          & 0.309       & \textbf{0.177} & {\underline{0.246}}    & 0.284      & 0.315            & 0.346        & 0.390           & 0.366          & 0.377          & 0.233         & 0.306         & 0.410          & 0.407          & 0.221             & 0.267         & 0.408         & 0.417        \\ 
\cmidrule(lr){2-22}
\multicolumn{1}{c|}{}                         & Avg & \textbf{0.119}    & \textbf{0.190}   & 0.142          & 0.229       & {\underline{0.132}}    & {\underline{0.221}}    & 0.180      & 0.266            & 0.183        & 0.271           & 0.175          & 0.250          & 0.150         & 0.246         & 0.226          & 0.299          & 0.150             & 0.226         & 0.280         & 0.321        \\ 
\specialrule{0.10em}{1pt}{1pt}
\multicolumn{2}{c|}{$1^{st}$ Count}                 & \textbf{35}       & \textbf{59}      & {\underline{14}}       & 0           & 7              & {\underline{4}}        & 0          & 0                & 0            & 1               & 1              & 1              & 0             & 0             & 1              & 1              & 4                 & 0             & 0             & 0            \\ 
\specialrule{0.15em}{2pt}{0pt}
\end{tabular}}
\end{table}

\section{Discussion}

\textbf{Potential limitations}. 
While PAMod demonstrates strong performance in forecasting tasks with cyclical distribution shifts, several limitations merit discussion:
\begin{itemize}
    \item \textbf{Fixed cycle assumption}. PAMod relies on a predefined cycle length $\tau$, which may not adapt well to series with varying or unknown periodicities (e.g., irregular event-driven patterns or multi-frequency cycles without a dominant period).
    \item \textbf{Channel-common cycle}. The current design applies the same cycle length to all channels, which could be suboptimal when different variables exhibit distinct periodic behaviors (e.g., temperature with daily cycles and electricity load with weekly cycles). A channel-adaptive cycle mechanism could offer a more tailored solution.
    \item \textbf{Sensitivity to cycle misalignment}. If the assumed cycle length significantly deviates from the true data periodicity, the phase and amplitude embeddings may fail to capture the actual distribution structure, leading to reduced robustness in shift modeling.
    \item \textbf{Limited to cyclical shifts}. PAMod is designed to model cyclically structured distribution shifts, but may not effectively handle abrupt, non-cyclical regime changes or anomalies that fall outside learned periodic patterns.
\end{itemize}

\textbf{Future Directions}. 
Several directions remain open for extending PAMod’s capability and applicability:
\begin{itemize}
    \item \textbf{Adaptive cycle learning}. Instead of using a fixed $\tau$, future versions could learn or dynamically adjust the cycle length from data, e.g., via periodicity estimation modules or multi-scale cycle banks.
    \item \textbf{Channel-wise cycle modeling}. Incorporating variable-specific cycle embeddings would allow PAMod to better handle multivariate series where different channels follow distinct periodic regimes.
    \item \textbf{Integration with structural dependencies}. Although PAMod focuses on temporal cyclicity, combining it with inter-variable relationship modeling (e.g., graph-based or attention-based channel interaction) could further improve performance in spatio-temporal or highly correlated multivariate settings.
    \item \textbf{Generalization to other non-stationary tasks}. Exploring PAMod’s adaptation to other domains with structured distribution shifts—such as financial volatility, energy load forecasting with weather shocks, or healthcare monitoring with seasonal effects—would test its broader utility.
\end{itemize}

In summary, while PAMod provides an effective framework for cyclical distribution modeling, further advances in adaptive cycle inference, channel-aware design, and integration with relational modeling could enhance its flexibility and scope.


\end{document}